\pgfplotsset{width=10cm,compat=1.9}
\renewcommand{\P}{\mathbb{P}}
\newcommand{\E}{\mathbb{E}}
\newcommand{\R}{\mathbb{R}}
\newcommand{\calF}{\mathcal{F}}
\newcommand{\calX}{\mathcal{X}}
\newcommand{\cV}{\mathcal{V}}
\newcommand{\Epsilon}{\mathcal{E}}
\newcommand{\id}{\mathrm{id}}
\newcommand{\wh}[1]{\widehat{#1}}
\newcommand{\wt}[1]{\widetilde{#1}}
\newcommand{\spn}{\mathrm{span}}
\newtheorem{theorem}{Theorem}
\newtheorem{lemma}{Lemma}
\newtheorem{corollary}{Corollary}
\newtheorem{fact}{Fact}
\newtheorem{ass}{Assumption}
\theoremstyle{definition}
\newtheorem{defacto}[fact]{Definition/Fact}
\theoremstyle{remark}
\crefname{prop}{Proposition}{Propositions}
\crefname{rmk}{Remark}{Remarks}
\crefname{cor}{Corollary}{Corollaries}
\crefname{claim}{Claim}{Claims}
\crefname{lemma}{Lemma}{Lemmata}
\crefname{example}{Example}{Examples}
\crefname{corollary}{Corollary}{Corollaries}
\title{On the Sublinear Regret of GP-UCB}
\author{%
  Justin Whitehouse\\
  %Computer Science Department\\
  Carnegie Mellon University\\
  %Pittsburgh, PA 15213\\
  \texttt{jwhiteho@andrew.cmu.edu}\\
    \And
  Zhiwei Steven Wu\\
  %Institute for Software Research\\
  Carnegie Mellon University\\
  %Pittsburgh, PA 15213\\
  \texttt{zstevenwu@cmu.edu}\\
  \AND
  Aaditya Ramdas\\
  %Department of Statistics and Data Science \\
  Carnegie Mellon University \\
  %Pittsburgh, PA 15213 \\
  \texttt{aramdas@cmu.edu} \\
}
\date{\today}
\begin{document}
\maketitle

\begin{abstract}
In the kernelized bandit problem, a learner aims to sequentially compute the optimum of a function lying in a reproducing kernel Hilbert space given only noisy evaluations at sequentially chosen points. In particular, the learner aims to minimize regret, which is a measure of the suboptimality of the choices made.
Arguably the most popular algorithm is the Gaussian Process Upper Confidence Bound (GP-UCB) algorithm, which involves acting based on a simple linear estimator of the unknown function.
Despite its popularity, existing analyses of GP-UCB give a suboptimal regret rate, which fails to be sublinear for many commonly used kernels such as the Mat\'ern kernel. This has led to a longstanding open question: are existing regret analyses for GP-UCB tight, or can bounds be improved by using more sophisticated analytical techniques?
In this work, we resolve this open question and show that GP-UCB enjoys nearly optimal regret. In particular, our results yield sublinear regret rates for the Mat\'ern kernel, improving over the state-of-the-art analyses and partially resolving a COLT open problem posed by Vakili et al. Our improvements rely on a key technical contribution --- regularizing kernel ridge estimators in proportion to the smoothness of the underlying kernel $k$. Applying this key idea together with a largely overlooked concentration result in separable Hilbert spaces (for which we provide an independent, simplified derivation), we are able to provide a tighter analysis of the GP-UCB algorithm.
\end{abstract}
\section{Introduction}
An essential problem in areas such as econometrics \citep{farias2022synthetically, hoffman2011portfolio}, medicine \citep{mansour2020bayesian, mate2020collapsing}, optimal control \citep{barto1994reinforcement, agrawal1995continuum}, and advertising \citep{li2010contextual} is to optimize an unknown function given \emph{bandit feedback}, in which  algorithms only get to observe the outcomes for the chosen actions. Due to the bandit feedback, there is a fundamental tradeoff between \textit{exploiting} what has been observed about the local behavior of the function and \textit{exploring} to learn more about the function's global behavior. There has been a long line of work on bandit learning that investigates this tradeoff across different settings, including multi-armed bandits \citep{slivkins2019introduction, lattimore2020bandit, whittle1980multi}, linear bandits \citep{abbasi2011improved, soare2014best}, and kernelized bandits \citep{chowdhury2017kernelized, shekhar2018gaussian, vakili2021information}.

%Many frameworks have been developed to address this fundamental tradeoff, with some of the most prominent being (A) multi-armed bandits \citep{slivkins2019introduction, lattimore2020bandit, whittle1980multi}, in which a learner must effectively identify which of $d$ actions offers the highest reward; (B) linear bandits \citep{abbasi2011improved, soare2014best}, in which a learner must sequentially learn some unknown slope vector in $\R^d$; and (C) kernelized bandits \citep{chowdhury2017kernelized, shekhar2018gaussian, vakili2021information}, in which a learner must optimize an unknown function with certain smoothness properties.

In this work, we focus on the kernelized bandit framework, which can be viewed as an extension of the well-studied linear bandit setting to an infinite-dimensional reproducing kernel Hilbert space (or RKHS) $(H, \langle \cdot, \cdot\rangle_H)$. In this problem, there is some unknown function $f^\ast : \calX \rightarrow \R$ of bounded norm in $H$, where $\calX \subset \R^d$ is a bounded set. In each round $t \in [T]$, the learner uses previous observations to select an action $X_t \in \calX$, and then observes feedback $Y_t := f^\ast(X_t) + \epsilon_t$, where $\epsilon_t$ is a zero-mean noise variable. The learner aims to minimize (with high probability) the regret at time $T$, which is defined as 
\[
R_T := \sum_{t = 1}^T f^\ast(x^\ast) - f^\ast(X_t)
\]
where $x^\ast := \arg\max_{x \in \calX}f^\ast(x)$. 
% Heuristically, the regret measures the fraction of time the learner selects a significantly suboptimal action. 
The goal is to develop simple, efficient algorithms for the kernelized bandit problem that minimize regret $R_T$. We make the following standard assumption. We also make assumptions on the underlying kernel $k$, which we discuss in Section~\ref{sec:back}.
\begin{ass}
\label{ass:regret}
We assume that (a) there is some constant $D > 0$ known to the learner such that $\|f^\ast\|_H \leq D$ and (b) for every $t \geq 1$, $\epsilon_t$ is $\sigma$-subGaussian conditioned on $\sigma(Y_{1:t - 1}, X_{1:t})$.

\end{ass}

Arguably the simplest algorithm for the kernelized bandit problem is GP-UCB (Gaussian process upper confidence bound) \citep{srinivas2009gaussian, chowdhury2017kernelized}. GP-UCB works by maintaining a kernel ridge regression estimator of the unknown function $f^\ast$ alongside a confidence ellipsoid, optimistically selecting in each round the action that provides the maximal payoff over all feasible functions. Not only is GP-UCB efficiently computable thanks to the kernel trick, but it also offers strong empirical guarantees \citep{chowdhury2017kernelized}. The only seeming deficit of GP-UCB is its regret guarantee, as existing analyses only show that, with high probability, $R_T = \wt{O}(\gamma_T\sqrt{T})$, where $\gamma_T$ is a kernel-dependent measure of complexity known as the maximum information gain~\citep{srinivas2009gaussian, cover1991information}. In contrast, more complicated, less computationally efficient algorithms such as SupKernelUCB \citep{valko2013finite, scarlett2017lower} have been shown to obtain regret bounds of $\wt{O}(\sqrt{\gamma_TT})$, improving over the analysis of GP-UCB by a multiplicative factor of $\sqrt{\gamma_T}$. This gap is stark as the bound $\wt{O}(\gamma_T\sqrt{T})$ fails, in general, to be sub-linear for the practically relevant Mat\'ern kernel, whereas $\wt{O}(\sqrt{\gamma_T T})$ is sublinear for \textit{any} kernel experiencing polynomial eigendecay \citep{vakili2021information}.

This discrepancy has prompted the development of many variants of GP-UCB that, while less computationally efficient, offer better, regret guarantees  in some situations \citep{janz2020bandit, shekhar2020multi, shekhar2022instance}. (See a detailed discussion of these algorithms along with other related work in Appendix~\ref{app:related}.) However, 
the following question remains an open problem in online learning \citep{vakili2021open}: are existing analyses of vanilla GP-UCB tight, or can an improved analysis show GP-UCB enjoys sublinear regret?

\subsection{Contributions}

In this work, we show that GP-UCB obtains almost optimal, sublinear regret for any kernel experiencing polynomial eigendecay. This, in particular, implies that GP-UCB obtains sublinear regret for the commonly used Mat\'ern family of kernels. We provide a brief roadmap of our paper below.
%We obtain our improvements by making two key observations. First, we observe that, by making a careful limiting argument, the self-normalized confidence bounds introduced for linear bandit problem by \citet{abbasi2011improved} hold unchanged in kernelized setting. These bounds, unlike the existing ones present in \citet{chowdhury2017kernelized}, are clean, interpretable, and have simple dependence on the regularization parameter. Following from this, our second observation is that the regularization parameter should not be treated as a constant, and that it should be instead selected in accordance with the smoothness of the underlying kernel. In sum, from these two points, we are able to construct a signficantly simplied, improved analysis for GP-UCB.

\begin{enumerate}
    \item In Section~\ref{sec:bound}, we provide background into self-normalized concentration in Hilbert spaces. In particular, in Theorem~\ref{thm:mixture_hilbert}, we provide an independent, simplified derivation of a bound due to \citet{abbasi2013online}, which concerns to self-normalized concentration of certain process in separable Hilbert spaces. This bound has been largely overlooked in the kernel bandit literature, so we draw attention to it in hopes it can be leveraged in solving further kernel-based learning problems. As opposed to the existing bound of \citet{chowdhury2017kernelized}, which involves employing a complicated ``double mixture'' argument, the bound we present follows directly from applying the well-studied finite-dimensional method of mixtures alongside a simple truncation argument \citep{de2004self, de2007pseudo, de2009theory, abbasi2011improved}. These bounds are clean and show simple dependence on the regularization parameter.
    \item In Section~\ref{sec:reg}, we use leverage the self-normalized concentration detailed in Theorem~\ref{thm:mixture_hilbert} to provide an improved regret analysis for GP-UCB. By carefully choosing regularization parameters based on the smoothness of the underlying kernel, we demonstrate that GP-UCB enjoys sublinear regret of $\wt{O}\left(T^{\frac{3 + \beta}{2 + 2\beta}}\right)$ for any kernel experiencing $(C, \beta)$-polynomial eigendecay. As a special case of this result, we obtain regret bounds of $\wt{O}\left(T^{\frac{\nu + 2d}{2\nu + 2d}}\right)$ for the commonly used Mat\'ern kernel with smoothness $\nu$ in dimension $d$. Our new analysis improves over existing state-of-the-art analysis for GP-UCB, which fails to guarantee sublinear regret in general for the Mat\'ern kernel family~\citep{chowdhury2017kernelized}, and thus partially resolves an open problem posed by \cite{vakili2021open} on the suboptimality of GP-UCB.
\end{enumerate}

In sum, our results show that GP-UCB, the go-to algorithm for the kernelized bandit problem, is nearly optimal, coming close to the algorithm-independent lower bounds of \citet{scarlett2017lower}. Our work thus can be seen as providing theoretical justification for the strong empirical performance of GP-UCB \citep{srinivas2009gaussian}. Perhaps the most important message of our work is the importance of careful regularization in online learning problems. While many existing bandit works treat the regularization parameter as a small, kernel-independent constant, we are able to obtain significant improvements by carefully selecting the regularization parameter. We hope our work will encourage others to pay close attention to the selection of regularization parameters in future works.
\section{Background and Problem Statement}
\label{sec:back}

\paragraph{Notation.} We briefly touch on basic definitions and notational conveniences that will be used throughout our work. If $a_1, \dots, a_t \in \R$, we let $a_{1:t} := (a_1, \dots, a_t)^\top$. Let $(H, \langle\cdot, \cdot\rangle_H)$ be a reproducing kernel Hilbert space associated with a kernel $k : \calX \times \calX \rightarrow \R$. We refer to the identity operator on $H$ as $\id_H$. This is distinct from the identity mapping on $\R^d$, which we will refer to as $I_d$. For elements $f, g \in H$, we define their outer product as $fg^\top := f\langle g, \cdot\rangle_H$ and inner product as $f^\top g := \langle f, g\rangle_H$. For any $t \geq 1$ and sequence of points $x_1, \dots, x_t \in \calX$ (which will typically be understood from context), let $\Phi_t := (k(\cdot, x_1), \dots, k(\cdot, x_t))^{\top}$. We can respectively define the Gram matrix $K_t : \R^t \rightarrow \R^t$ and covariance operator $V_t : H \rightarrow H$ as $K_t := (k(x_i, x_j))_{i, j \in [t]} = \Phi_t\Phi_t^\top$ and $V_t := \sum_{s = 1}^t k(\cdot, x_s)k(\cdot, x_s)^\top = \Phi_t^\top \Phi_t$. These two operators essentially encode the same information about the observed data points, the former being easier to work with when actually performing computations (by use of the well known kernel trick) and latter being easier to algebraically manipulate.

Suppose $A : H \rightarrow H$ is a Hermitian operator of finite rank; enumerate its non-zero eigenvalues as $\lambda_1(A), \dots, \lambda_k(A)$. We can define the Fredholm determinant of $I + A$ as $\det(I + A) := \prod_{m = 1}^k(1 + \lambda_i(A))$ \citep{lax2002functional}. For any $t \geq 1, \rho > 0$, and $x_1, \dots, x_t \in \calX$, one can check via a straightforward computation that $\det(I_t + \rho^{-1}K_t) = \det(\id_H + \rho^{-1}V_t)$, where $K_t$ and $V_t$ are the Gram matrix and covariance operator defined above. We, again, will use these two quantities interchangeably in the sequel, but will typically prefer the latter in our proofs.

If $(H, \langle\cdot, \cdot\rangle_H)$ is a (now general) separable Hilbert space and $(\varphi_n)_{n \geq 1}$ is an orthonormal basis for $H$, for any $N \geq 1$ we can define the orthogonal projection operator $\pi_N : H \rightarrow \spn\{\varphi_1, \dots, \varphi_N\} \subset H$ by $\pi_N f := \sum_{n = 1}^N \langle f, \varphi_n\rangle_H \varphi_n$. We can correspondingly the define the projection onto the remaining basis functions to be the map $\pi_N^\perp : H \rightarrow \spn\{\varphi_1, \dots, \varphi_N\}^\perp$ given by $\pi_N^\perp f := f - \pi_N f$. Lastly, if $A : H \rightarrow H$ is a symmetric, bounded linear operator, we let $\lambda_{\max}(A)$ denote the maximal eigenvalue of $A$, when such a value exists. In particular, $\lambda_{\max}(A)$ will exist whenever $A$ has a finite rank, as will typically be the case considered in this paper.

\paragraph{Basics on RKHSs.}
Let $\calX \subset \R^d$ be some domain. A \textit{kernel} is a positive semidefinite map $k : \calX \times \calX \rightarrow \R$ that is square-integrable, i.e. $\int_\calX\int_\calX |k(x, y)|^2dxdy < \infty $. Any kernel $k$ has an associated \textit{reproducing kernel Hilbert space} or \textit{RKHS} $(H, \langle\cdot, \cdot\rangle_H)$ containing the closed span of all partial kernel evaluations $k(\cdot, x), x \in \calX$. In particular, the inner product $\langle\cdot, \cdot\rangle_H$ on $H$ satisfies the reproducing relationship $f(x) = \langle f, k(\cdot, x)\rangle_H$ for all $x \in \calX$.

A kernel $k$ can be associated with a corresponding \textit{Hilbert-Schmidt operator}, which is the Hermitian operator $T_k : L^2(\calX) \rightarrow L^2(\calX)$ given by $(T_kf)(x) := \int_\calX f(y)k(x, y)dy$ for any $x \in \calX$. In short, $T_k$ can be thought of as ``smoothing out'' or ``mollifying'' a function $f$ according to the similarity metric induced by $k$. $T_k$ plays a key role in kernelized learning through  \textit{Mercer's Theorem}, which gives an explicit representation for $H$ in terms of the eigenvalues and eigenfunctions of $T_k$.

\begin{fact}[\textbf{Mercer's Theorem}]
\label{fact:mercer}
Let $(H, \langle \cdot, \cdot\rangle_H)$ be the RKHS associated with kernel $k$, and let $(\mu_n)_{n \geq 1}$ and $(\phi_n)_{n \geq 1}$ be the sequence of non-increasing eigenvalues and corresponding eigenfunctions for $T_k$. Let $(\varphi_n)_{n \geq 1}$ be the sequence of rescaled functions $\varphi_n := \sqrt{\mu}_n \phi_n$. Then,
\[
H = \left\{\sum_{n = 1}^\infty \theta_n \varphi_n : \sum_{n = 1}^\infty \theta_n^2 < \infty\right\},
\]
and $(\varphi_n)_{n \geq 1}$ forms an orthonormal basis for $(H, \langle \cdot, \cdot\rangle_H)$.

\end{fact}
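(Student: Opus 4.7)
The plan is to realize Mercer's theorem as a direct consequence of the spectral theorem for compact, self-adjoint operators, combined with the reproducing property of $H$. First, I would observe that since $k$ is square-integrable, $T_k$ is a Hilbert-Schmidt operator on $L^2(\calX)$, and positive semidefiniteness of $k$ makes $T_k$ positive and self-adjoint. The spectral theorem for compact self-adjoint operators then provides the non-increasing eigenvalues $(\mu_n)_{n \geq 1}$ and an orthonormal (in $L^2(\calX)$) system $(\phi_n)_{n \geq 1}$ of eigenfunctions, i.e., $T_k \phi_n = \mu_n \phi_n$. Restricting attention to the positive eigenvalues (since $\varphi_n = \sqrt{\mu_n}\phi_n$ is only nontrivial there) and using the Hilbert-Schmidt expansion, I would derive the kernel expansion $k(x,y) = \sum_{n \geq 1} \mu_n \phi_n(x)\phi_n(y)$, where convergence holds in $L^2(\calX \times \calX)$; this is the classical ``Mercer expansion.''

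Next, I would verify that $\varphi_n \in H$ and that the system $(\varphi_n)_{n \geq 1}$ is orthonormal in $H$. For membership, I would use the identity $\phi_n = \mu_n^{-1} T_k \phi_n$ to write $\phi_n(\cdot) = \mu_n^{-1}\int_\calX k(\cdot, y)\phi_n(y)\,dy$, which exhibits $\phi_n$ as a limit (in $H$) of finite linear combinations of partial kernel evaluations $k(\cdot, y_i)$, and hence $\varphi_n \in H$. For orthonormality, I would compute $\langle \varphi_n, \varphi_m\rangle_H = \sqrt{\mu_n \mu_m}\langle \phi_n, \phi_m\rangle_H$, where the RKHS inner product of $\phi_n, \phi_m$ can be evaluated using the reproducing property together with the Mercer expansion of $k$ to yield $\langle \phi_n, \phi_m\rangle_H = \mu_n^{-1}\delta_{nm}$; this gives $\langle \varphi_n, \varphi_m\rangle_H = \delta_{nm}$.

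To finish, I would show $(\varphi_n)_{n \geq 1}$ is complete in $H$. Suppose $f \in H$ satisfies $\langle f, \varphi_n\rangle_H = 0$ for all $n$. Then for any $x \in \calX$, the reproducing property combined with the Mercer expansion gives
\[
f(x) = \langle f, k(\cdot, x)\rangle_H = \left\langle f, \sum_{n \geq 1} \sqrt{\mu_n}\phi_n(x)\varphi_n\right\rangle_H = \sum_{n \geq 1}\sqrt{\mu_n}\phi_n(x)\langle f, \varphi_n\rangle_H = 0,
\]
so $f = 0$ in $H$. The isometric identification of $H$ with $\ell^2$ given by $f \mapsto (\langle f, \varphi_n\rangle_H)_{n \geq 1}$ then yields the claimed representation $H = \{\sum_n \theta_n \varphi_n : \sum_n \theta_n^2 < \infty\}$.

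The main technical obstacle is justifying the interchange of the sum and the RKHS inner product in the last display, which requires the Mercer expansion of $k(\cdot, x)$ to converge not merely in $L^2$ but in $H$ itself for each fixed $x$. The standard route is to prove that the partial sums $\sum_{n \leq N}\sqrt{\mu_n}\phi_n(x)\varphi_n$ form a Cauchy sequence in $H$ by computing their squared norm using the orthonormality established above, namely $\sum_{n \leq N}\mu_n \phi_n(x)^2$, and invoking (the pointwise version of) Mercer's theorem --- which under mild continuity assumptions on $k$ guarantees $\sum_{n \geq 1}\mu_n \phi_n(x)^2 = k(x,x) < \infty$. Handling this convergence carefully (and, if needed, stating the precise continuity assumptions on $k$ under which pointwise Mercer holds) is the only delicate point; the rest of the argument is a clean consequence of spectral theory and the reproducing property.
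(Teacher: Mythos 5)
The paper does not prove this statement at all: it is quoted as a background \emph{Fact} (Mercer's theorem, in the form of the Mercer representation of the RKHS) imported from the standard literature, so there is no internal proof to compare your argument against. Judged on its own, your sketch is the standard spectral-theoretic route and is essentially correct: spectral theorem for the compact positive self-adjoint operator $T_k$, the expansion $k(x,y)=\sum_n \mu_n\phi_n(x)\phi_n(y)$, orthonormality of $\varphi_n=\sqrt{\mu_n}\phi_n$ in $H$ via $\langle \phi_n,\phi_m\rangle_H=\mu_n^{-1}\langle\phi_n,\phi_m\rangle_{L^2}$, and completeness plus the $\ell^2$ identification. Two points deserve the care you partly flag. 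First, the membership step $\varphi_n\in H$ via $\phi_n=\mu_n^{-1}\int_{\calX}k(\cdot,y)\phi_n(y)\,dy$ needs the integral to be an $H$-valued (Bochner) integral, or the Riemann sums to be shown Cauchy in the $H$-norm; this uses continuity of $k$ and compactness of $\calX$, not just square-integrability. Second, as you correctly identify, the completeness argument needs the expansion of $k(\cdot,x)$ to converge in $H$, i.e.\ $\sum_n\mu_n\phi_n(x)^2<\infty$ for each $x$; this is exactly the pointwise Mercer statement, which again does not follow from square-integrability alone but does hold under the classical hypotheses (continuous kernel on a compact domain), or can be obtained under the paper's Assumption~\ref{ass:kernel} ($|\phi_n|\le B$) together with summability of the eigenvalues. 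Once the partial sums are Cauchy in $H$, identifying the limit with $k(\cdot,x)$ is immediate because norm convergence in an RKHS implies pointwise convergence. With those hypotheses made explicit, your proposal is a faithful sketch of the standard proof of the quoted fact.
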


We make the following assumption throughout the remainder of our work, which is standard and comes from \citet{vakili2021information}.

\begin{ass}[\textbf{Assumption on kernel $k$}]
\label{ass:kernel}
The kernel $k : \calX \times \calX \rightarrow \R$ satisfies (a) $|k(x, y)| \leq L$ for all $x, y \in \calX$, for some constant $L > 0$ and (b) $|\phi_n(x)| \leq B$ for all $x \in \calX$, for some $B > 0$.
\end{ass}

\paragraph{``Complexity'' of RKHS's.}
By the eigendecay of a kernel $k$, we really mean the rate of decay of the sequence of eigenvalues $(\mu_n)_{n \geq 1}$. In the literature, there are two common paradigms for studying the eigendecay of $k$: $(C_1, C_2, \beta)$-exponential eigendecay, under which $\forall n \geq 1, \mu_n \leq C_1\exp(-C_2n^\beta)$, and $(C, \beta)$-polynomial eigendecay, under which $\forall n \geq 1, \mu_n \leq Cn^{-\beta}$. For kernels experiencing exponential eigendecay, of which the squared exponential is the most important example, GP-UCB is known to be optimal up to poly-logarithmic factors. However, for kernels experiencing polynomial eigendecay, of which the Mat\'ern family is a common example, existing analyses of GP-UCB  fail to yield sublinear regret. It is this latter case we focus on in this work.

Given the above representation in Fact~\ref{fact:mercer}, it is clear that the eigendecay of the kernel $k$ governs the ``complexity'' or ``size'' of the RKHS $H$. We make this notion of complexity precise by discussing \textit{maximum information gain}, a sequential, kernel-dependent quantity governing concentration and hardness of learning in RKHS's \citep{cover1991information, srinivas2009gaussian, vakili2021information}.

Let $t \geq 1$ and $\rho > 0$ be arbitrary. The maximum information gain at time $t$ with regularization $\rho$ is the scalar $\gamma_t(\rho)$ given by
$$
\gamma_t(\rho) := \sup_{x_1, \dots, x_t \in \calX}\frac{1}{2}\log\det\left(\id_H + \rho^{-1}V_t\right) = \sup_{x_1, \dots, x_t \in \calX}\frac{1}{2}\log\det\left(I_t + \rho^{-1}K_t\right).
$$
Our presentation of maximum information gain differs from some previous works in that we encode the regularization parameter $\rho$ into our notation.
% \citep{vakili2021information, chowdhury2017kernelized, srinivas2009gaussian}. 
This inclusion is key for our results, as we obtain improvements by carefully selecting  $\rho$. \citet{vakili2021information} bound the rate of growth of $\gamma_t(\rho)$ in terms of the rate of eigendecay of the kernel $k$. We leverage the following fact in our main results.

\begin{fact}[\textbf{Corollary 1 in \citet{vakili2021information}}]
\label{fact:edecay}
Suppose that kernel $k$ satisfies Assumption~\ref{ass:kernel} and experiences $(C, \beta)$-polynomial eigendecay. Then, for any $t \geq 1$, we have
$$
\gamma_t(\rho) \leq \left(\left(\frac{CB^2t}{\rho}\right)^{1/\beta}\log^{-1/\beta}\left(1 + \frac{Lt}{\rho}\right) + 1\right)\log\left(1 + \frac{Lt}{\rho}\right).
$$

\end{fact}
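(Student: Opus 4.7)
The plan is to reduce the log-determinant to a finite-dimensional computation via Mercer's theorem, then to balance a head/tail truncation. Using the Mercer basis $(\varphi_n = \sqrt{\mu_n}\phi_n)_{n \geq 1}$ of $H$ from Fact~\ref{fact:mercer}, the covariance operator $V_t$ has matrix representation $(V_t)_{m,n} = \sqrt{\mu_m \mu_n}\sum_{s=1}^t \phi_m(x_s)\phi_n(x_s)$ in this basis. For a truncation level $N \geq 1$ to be chosen later, I would split into the $N \times N$ head block $V_t^{\leq N}$ (restriction to $\spn\{\varphi_1,\dots,\varphi_N\}$) and the tail block $V_t^{>N}$ on the orthogonal complement. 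Since $V_t$ is trace class (in fact finite-rank), Fischer's determinant inequality for PSD block operators applies and gives
$$\log\det(\id_H + \rho^{-1}V_t) \leq \log\det(I_N + \rho^{-1}V_t^{\leq N}) + \log\det(\id_H + \rho^{-1}V_t^{>N}).$$

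For the head, the standard rank-based bound $\log\det(I_N + A) \leq N\log(1 + \Tr(A)/N)$ combined with $\Tr(V_t^{\leq N}) \leq \Tr(V_t) = \sum_{s=1}^t k(x_s, x_s) \leq tL$ (using Assumption~\ref{ass:kernel}(a)) yields a head contribution of at most $N\log(1 + tL/\rho)$. For the tail, $\log(1+x) \leq x$ gives $\log\det(\id_H + \rho^{-1}V_t^{>N}) \leq \rho^{-1}\Tr(V_t^{>N})$, and by Assumption~\ref{ass:kernel}(b),
$$\Tr(V_t^{>N}) = \sum_{n > N}\mu_n \sum_{s=1}^t \phi_n(x_s)^2 \leq tB^2 \sum_{n > N}\mu_n.$$

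Finally, the polynomial decay $\mu_n \leq Cn^{-\beta}$ controls the tail sum, and I would choose $N^\ast = \lceil (CB^2t/\rho)^{1/\beta}\log^{-1/\beta}(1 + Lt/\rho) \rceil$ to balance the head cost $N\log(1+Lt/\rho)$ against the tail cost $(tB^2/\rho)\sum_{n>N}\mu_n$. At this $N^\ast$ both contributions are of the same order, and summing them yields the stated expression, with the additive ``$+1$'' inside the parentheses coming from the ceiling operation. The main obstacle is the tail estimate: a naive integral bound $\sum_{n>N^\ast}\mu_n \leq CN^{\ast\,1-\beta}/(\beta-1)$ introduces an extra $\beta/(\beta-1)$ factor, so one must use a sharper argument that exploits the defining inequality for $N^\ast$ (roughly, $\mu_{N^\ast+1}$ is at most $\rho\log(1+Lt/\rho)/(tB^2)$) to control the tail directly against the marginal head contribution, in order to collapse the final bound to the clean form $(N^\ast+1)\log(1+Lt/\rho)$ stated in the fact.
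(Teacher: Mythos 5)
The paper never proves this statement: it is imported verbatim as Corollary 1 of \citet{vakili2021information}, so there is no internal proof to compare against. Your truncation strategy (split at level $N$ in the Mercer basis, bound the head by $N\log(1+\Tr/(N\rho))$, the tail by its trace via $|\phi_n|\le B$, then optimize $N$) is exactly the argument used in that source, so the outline is the right one.

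However, the step you yourself flag is a genuine gap, and the fix you propose does not work. With the integral bound $\sum_{n>N}\mu_n\le \frac{C}{\beta-1}N^{1-\beta}$ and $N=\lceil N^\dagger\rceil$, $N^\dagger=(CB^2t/\rho)^{1/\beta}\log^{-1/\beta}(1+Lt/\rho)$, the tail contributes at most $\frac{N}{\beta-1}\log(1+Lt/\rho)$ and the head at most $N\log(1+Lt/\rho)$; remembering the factor $\tfrac12$ in the definition of $\gamma_t(\rho)$ (which your sketch drops entirely, and without which the constants cannot come out), this gives $\gamma_t(\rho)\le \frac{\beta}{2(\beta-1)}\,(N^\dagger+1)\log(1+Lt/\rho)$. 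That matches the stated form only when $\frac{\beta}{2(\beta-1)}\le 1$, i.e.\ $\beta\ge 2$; for $1<\beta<2$ the constant blows up. Your proposed repair --- exploiting that $\mu_{N^\ast+1}\le \rho\log(1+Lt/\rho)/(tB^2)$ --- cannot close this, because the problem is the \emph{entire} tail sum, not its first term: when $\mu_n=Cn^{-\beta}$ exactly, $\sum_{n>N}\mu_n$ is genuinely of order $\frac{C}{\beta-1}N^{1-\beta}$, so the $\frac{1}{\beta-1}$ factor is intrinsic to this choice of $N^\ast$. To finish one must either accept a $\beta$-dependent multiplicative constant (which is what the balancing argument naturally produces), or shift the $(\beta-1)$-dependence into the truncation level, e.g.\ $N=\bigl\lceil\bigl(\tfrac{CB^2t}{(\beta-1)\rho}\bigr)^{1/\beta}\log^{-1/\beta}(1+Lt/\rho)\bigr\rceil$, which changes the constant inside the parentheses; neither route reproduces the quoted constant-free form for all $\beta>1$ by the estimates you use. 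As written, your sketch proves the claim only up to a $\frac{\beta}{2(\beta-1)}$ factor, so the final ``collapse to the clean form'' remains unjustified; for the purposes of this paper the statement should simply be cited, or restated with the constant your argument actually yields.
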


We last define the practically relevant Mat\'ern kernel and discuss its eigendecay.

\begin{defacto}
\label{def:matern}
The Mat\'ern kernel with bandwidth $\sigma > 0$ and smoothness $\nu > 1/2$ is given by
\[
k_{\nu, \sigma}(x, y) := \frac{1}{\Gamma(\nu)2^{\nu - 1}}\left(\frac{\sqrt{2\nu} \|x - y\|_y}{\sigma}\right)^{\nu}B_\nu\left(\frac{\sqrt{2\nu}\|x-y\|_2}{\sigma}\right),
\]
where $\Gamma$ is the gamma function and $B_\nu$ is the modified Bessel function of the second kind. It is known that there is some constant $C > 0$ that may depend on $\sigma$ but not on $d$ or $\nu$ such that $k_{\nu, \sigma}$ experiences $\left(C, \frac{2\nu + d}{d}\right)$-eigendecay \citep{santin2016approximation, vakili2021information}.
\end{defacto}

\paragraph{Basics on martingale concentration:}

A filtration $(\calF_t)_{t \geq 0}$ is a sequence of $\sigma$-algebras satisfying $\calF_t \subset \calF_{t + 1}$ for all $t \geq 1$. If $(M_t)_{t \geq 0}$ is a $H$-valued process, we say $(M_t)_{t \geq 0}$ is a martingale with respect to $(\calF_t)_{t \geq 0}$ if (a) $(M_t)_{t \geq 0}$ is $(\calF_t)_{t \geq 0}$-adapted, and (b) $\E(M_t \mid \calF_{t - 1}) = M_{t - 1}$ for all $t \geq 1$. An $\R$-valued process is called a supermartingale if the equality in (b) is replaced with ``$\leq$'', i.e. supermartingales tend to decrease. 
Martingales are useful in many statistical applications due to their strong concentration of measure properties~\citep{howard2021time,waudby2020estimating}. The follow fact can be leveraged to provide time-uniform bounds on the growth of any non-negative supermartingale.

\begin{fact}[\textbf{Ville's Inequality}]\label{fact:ville}
Let $(M_t)_{t \geq 0}$ be a non-negative supermartingale with respect to some filtration. Suppose $\E M_0 = 1$. Then, for any $\delta \in (0, 1)$, we have
\[
\P\left(\exists t \geq 0 : M_t \geq \frac{1}{\delta}\right) \leq \delta.
\]
\end{fact}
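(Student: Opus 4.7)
The plan is to prove Ville's inequality via a hitting-time construction combined with Markov's inequality, exploiting the fact that a stopped non-negative supermartingale remains a non-negative supermartingale. First I would let $(\calF_t)_{t \geq 0}$ denote the filtration to which $(M_t)_{t \geq 0}$ is adapted and define the stopping time $\tau := \inf\{t \geq 0 : M_t \geq 1/\delta\}$, with the convention $\inf \emptyset = \infty$. The event I wish to control is exactly $\{\tau < \infty\}$.

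Next I would pass to the stopped process $M^\tau_t := M_{t \wedge \tau}$. A direct conditional-expectation computation, splitting on $\{\tau \geq t\}$ and $\{\tau < t\}$, shows that $(M^\tau_t)_{t \geq 0}$ inherits the non-negative supermartingale property from $(M_t)_{t \geq 0}$. In particular, $\E[M^\tau_t] \leq \E[M^\tau_0] = \E[M_0] = 1$ for every fixed $t \geq 0$. On the event $\{\tau \leq t\}$, the definition of $\tau$ forces $M^\tau_t = M_\tau \geq 1/\delta$, so Markov's inequality applied to the non-negative variable $M^\tau_t$ yields
\[
\P(\tau \leq t) \leq \P\!\left(M^\tau_t \geq 1/\delta\right) \leq \delta \cdot \E[M^\tau_t] \leq \delta.
\]
Finally, I would take $t \to \infty$: the events $\{\tau \leq t\}$ are nested and increase to $\{\tau < \infty\}$, so continuity of probability gives $\P(\tau < \infty) = \lim_{t \to \infty} \P(\tau \leq t) \leq \delta$, which is the claim.

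The only step requiring any care is verifying that the stopped process is genuinely a supermartingale in the discrete-time setting; this is routine but is where one must check measurability of $\tau$ and use $\{\tau \geq t\} \in \calF_{t-1}$ to split the conditional expectation correctly. Beyond that, there is no real obstacle, since Ville's inequality is essentially the supermartingale analogue of Doob's maximal inequality, and the hitting-time-plus-Markov route above is its canonical proof.
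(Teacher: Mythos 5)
Your proof is correct. The paper itself does not prove this statement---it is quoted as a known Fact, with the reader referred to \citet{howard2020time} for a self-contained proof---so there is no in-paper argument to compare against; your hitting-time construction (stop at $\tau = \inf\{t : M_t \geq 1/\delta\}$, use that the stopped process remains a non-negative supermartingale with mean at most $1$, apply Markov's inequality at each fixed $t$, and let $t \to \infty$ by continuity from below) is the canonical route and is essentially the argument given in that reference. The one step you flag as needing care, the supermartingale property of the stopped process, is indeed routine in discrete time via the decomposition $M_{t \wedge \tau} = M_{(t-1)\wedge\tau} + \one\{\tau \geq t\}(M_t - M_{t-1})$ with $\{\tau \geq t\} \in \calF_{t-1}$, so your outline has no gaps.
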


See \citet{howard2020time} for a self-contained proof of Ville's inequality, and many applications.

If $\calF$ is a $\sigma$-algebra, and $\epsilon$ is an $\R$-valued random variable, we say $\epsilon$ is $\sigma$-subGaussian conditioned on $\calF$ if, for any $\lambda \in \R$, we have $\log\E\left(e^{\lambda \epsilon} \mid \calF\right) \leq \frac{\lambda^2\sigma^2}{2}$; in particular this condition implies that $\epsilon$ is mean zero. With this, we state the following result on self-normalized processes. To our understanding, the following result was first presented in some form as Example 4.2 of \citet{de2007pseudo} (in the setting of continuous local martingales), and can be derived leveraging the argument of Theorem 1 in \citet{de2009theory}. The exact form below was established (in the setting of discrete-time processes) in Theorem 1 of \citet{abbasi2011improved}, which is commonly leveraged to construct confidence ellipsoids in the linear bandit setting. 

\begin{fact}[\textbf{Example 4.2 from \citep{de2007pseudo}, Theorem 1 from \citep{abbasi2011improved}}]
\label{fact:mixture_finite}
Let $(\calF_t)_{t \geq 0}$ be a filtration, let $(X_t)_{t \geq 1}$ be an $(\calF_t)_{t \geq 0}$-predictable sequence in $\R^d$, and let $(\epsilon_t)_{t \geq 1}$ be a real-valued $(\calF_t)_{t \geq 1}$-adapted sequence such that conditional on $\calF_{t - 1}$, $\epsilon_t$ is mean zero and $\sigma$-subGaussian. Then, for any $\rho > 0$, the process $(M_t)_{t \geq 0}$ given by
\[
M_t := \frac{1}{\sqrt{\det(I_d + \rho^{-1}V_t)}}\exp\left\{\frac{1}{2}\left\|(\rho I_d + V_t)^{-1/2}S_t/\sigma\right\|_2^2\right\}
\]
is a non-negative supermartingale with respect to $(\calF_t)_{t \geq 0}$, where $S_t := \sum_{s = 1}^t \epsilon_s X_s$ and $V_t := \sum_{s = 1}^t X_sX_s^\top$. Consequently, by Fact~\ref{fact:ville}, for any confidence $\delta \in (0, 1)$, the following holds: with probability at least $1 - \delta$, simultaneously for all $t \geq 1$, we have
$$
\left\|(V_t + \rho I_d)^{-1/2}S_t\right\|_2 \leq \sigma\sqrt{2\log\left(\frac{1}{\delta}\sqrt{\det(I_d +\rho^{-1}V_t)}\right)}.
$$

\end{fact}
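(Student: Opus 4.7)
The proof is a textbook application of the \emph{method of mixtures}, and my plan breaks into three main steps.

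First, for each fixed $\lambda \in \R^d$ I would define the scalar exponential process
\[
N_t^\lambda := \exp\left\{\frac{1}{\sigma}\lambda^\top S_t - \frac{1}{2}\lambda^\top V_t \lambda\right\}, \qquad N_0^\lambda = 1,
\]
and verify that it is a non-negative supermartingale with respect to $(\calF_t)_{t \geq 0}$. The increment ratio satisfies
\[
\E\!\left[\frac{N_t^\lambda}{N_{t - 1}^\lambda} \,\Big|\, \calF_{t - 1}\right] = \E\!\left[\exp\!\left(\frac{\lambda^\top X_t}{\sigma}\epsilon_t - \frac{1}{2}(\lambda^\top X_t)^2\right) \,\Big|\, \calF_{t - 1}\right] \leq 1,
\]
using predictability of $X_t$ and the conditional $\sigma$-subGaussianity of $\epsilon_t$ (which gives $\E[e^{s\epsilon_t}\mid \calF_{t-1}] \leq e^{s^2\sigma^2/2}$ applied with $s = \lambda^\top X_t/\sigma$).

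Second, I would \emph{mix} these processes against a Gaussian prior $\lambda \sim \calN(0, \rho^{-1}I_d)$, i.e.\ define
\[
\wt{M}_t := \int_{\R^d} N_t^\lambda \, h(\lambda)\, d\lambda, \qquad h(\lambda) := \frac{\rho^{d/2}}{(2\pi)^{d/2}}\exp\!\left(-\frac{\rho}{2}\|\lambda\|_2^2\right).
\]
An application of Tonelli's theorem (justified by non-negativity) lets me swap the mixing integral with the conditional expectation, yielding $\E[\wt{M}_t \mid \calF_{t-1}] \leq \wt{M}_{t-1}$ and $\E \wt{M}_0 = 1$. Then a direct Gaussian integration --- complete the square in the exponent against the positive-definite quadratic form $\frac{1}{2}\lambda^\top(V_t + \rho I_d)\lambda$ --- gives the closed-form identity
\[
\wt{M}_t = \frac{\rho^{d/2}}{\sqrt{\det(V_t + \rho I_d)}}\exp\!\left\{\frac{1}{2\sigma^2}\left\|(V_t + \rho I_d)^{-1/2}S_t\right\|_2^2\right\} = M_t,
\]
where the last equality uses $\det(V_t + \rho I_d) = \rho^{d}\det(I_d + \rho^{-1}V_t)$. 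This establishes the supermartingale claim.

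Third, I would invoke Ville's inequality (Fact~\ref{fact:ville}) on $(M_t)_{t \geq 0}$: with probability at least $1 - \delta$, we have $M_t < 1/\delta$ for all $t \geq 1$. Taking logarithms on both sides of $M_t < 1/\delta$ and rearranging yields exactly the advertised self-normalized tail bound. The only subtle step here is the Tonelli interchange in the second bullet --- everything else is either the definition of subGaussianity or a finite-dimensional Gaussian integral; since $N_t^\lambda \geq 0$ and jointly measurable in $(\omega, \lambda)$, the interchange is immediate, so in fact the argument has no serious obstacle, which is consistent with the result being stated as a fact.
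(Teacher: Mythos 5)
Your proposal is correct: it is exactly the classical method-of-mixtures argument (the fixed-$\lambda$ exponential supermartingale $N_t^\lambda$, a Gaussian $\calN(0,\rho^{-1}I_d)$ mixture evaluated by completing the square, and Ville's inequality) that underlies the cited sources \citep{de2007pseudo, abbasi2011improved}; the paper itself imports Fact~\ref{fact:mixture_finite} without reproving it, so yours matches the intended derivation. Your Gaussian integral and the identity $\det(V_t+\rho I_d)=\rho^{d}\det(I_d+\rho^{-1}V_t)$ are computed correctly, and the Tonelli interchange is justified as you say, so nothing further is needed.
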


Note the simple dependence on the regularization parameter $\rho > 0$ in the above bound. While the regularization parameter $\rho$ doesn't prove important in regret analysis for linear bandits (where $\rho$ is treated as constant), the choice for $\rho$ will be critical in our setting. In the following section, we will discuss how Fact~\ref{fact:mixture_finite} can be extended to the setting of separable Hilbert spaces essentially verbatim (an observation first noticed by \citet{abbasi2013online}).

\section{A Remark on Self-Normalized Concentration in Hillbert Spaces}
\label{sec:bound}

We begin by discussing a key, self-normalized concentration inequality for martingales. We use this bound in the sequel to construct simpler, more flexible confidence ellipsoids than currently exist for GP-UCB. The bound we present (in Theorem~\ref{thm:mixture_hilbert} below) is, more or less, equivalent to Corollary 3.5 in the thesis of \citet{abbasi2013online}. Our result is mildly more general in the sense that it directly argues that a target mixture process is a nonnegative supermartingale. The result in \citet{abbasi2013online} is more general in the sense it allows the regularization (or shift) matrix to be non-diagonal. Either concentration result is sufficient for the regret bounds obtained in the sequel.

The aforementioned corollary in \citep{abbasi2013online}, quite surprisingly, has not been referenced in central works on the kernelized bandit problem, namely \citet{chowdhury2017kernelized} and \citet{vakili2021information, vakili2021open}. In fact, strictly weaker versions of the conclusion have been independently rediscovered in the context of kernel regression \citep{durand2018streaming}. We emphasize that this result of \citet{abbasi2013online} (and the surrounding technical conclusions) are very general and may allow for further improvements in problems related to kernelized learning. 

We now present Theorem~\ref{thm:mixture_hilbert}, providing a brief sketch and a full proof in Appendix~\ref{app:mix}. We believe our proof, which directly shows a target process is a nonnegative supermartingale, is of independent interest when compared to that of \citet{abbasi2013online} due to its simplicity. In particular, our proof follows from first principles, avoiding advanced topological notions of convergence (e.g. in the weak operator topology) and existence of certain Gaussian measures on separable Hilbert spaces, which were  heavily utilized in the proof of Corollary 3.5 in \citet{abbasi2013online}.

% In the case of RKHS's, only the following bound from \citet{chowdhury2017kernelized} is known.

\begin{comment}
\begin{theorem}[\textbf{Self-normalized concentration in Hilbert spaces}]
\jwcomment{
\label{thm:mixture_hilbert}
Let $(\calF_t)_{t \geq 0}$ be a filtration, $(f_t)_{t \geq 1}$ be an $(\calF_t)_{t \geq 0}$-adapted sequence in a separable Hilbert space $H$ such that (a) $\|f_t\|_H < \infty$ a.s. for all $t \geq 0$,  (b) $\E\left(f_t f_t^\top \mid \calF_{t - 1}\right)$ is trace class a.s., and (c) $\log\E\left(e^{\lambda \langle \nu, f_t \rangle_H} \mid \calF_{t - 1}\right) \leq \frac{\lambda^2\sigma^2}{2}\langle \nu, \E\left(f_t f_t^\top \mid \calF_{t - 1}\right)$, for all $\|\nu\| =  1$.  Defining $S_t := \sum_{s = 1}^t f_s$ and $V_t := \sum_{s = 1}^t \E\left(f_s f_s^\top \mid \calF_{t - 1}\right)$, we have that for any $\rho > 0$, the process $(M_t)_{t \geq 0}$ defined by
$$
M_t := \frac{1}{\sqrt{\det(\id_H + \rho^{-1}V_t)}}\exp\left\{\frac{1}{2}\left\|(\rho \id_H + V_t)^{-1/2}S_t/\sigma\right\|_H^2\right\}
$$
is a nonnegative supermartingale with respect to $(\calF_t)_{t \geq 0}$. Consequently, by Fact~\ref{fact:ville}, for any $\delta \in (0, 1)$, with probability at least $1 - \delta$, simultaneously for all $t \geq 1$, we have
\[
\left\|(V_t + \rho I_d)^{-1/2}S_t\right\|_H \leq \sigma\sqrt{2\log\left(\frac{1}{\delta}\sqrt{\det(\id_H +\rho^{-1}V_t)}\right)}.
\]
}
\end{theorem}

\end{comment}

\begin{theorem}[\textbf{Self-normalized concentration in Hilbert spaces}]
\label{thm:mixture_hilbert}
Let $(\calF_t)_{t \geq 0}$ be a filtration, $(f_t)_{t \geq 1}$ be an $(\calF_t)_{t \geq 0}$-predictable sequence in a separable Hilbert space\footnote{A space is separable if it has a countable, dense set. Separability is key, because it means we have a countable basis, whose first $N$ elements we project onto.} $H$ such that $\|f_t\|_H < \infty$ a.s. for all $t \geq 0$,  and $(\epsilon_t)_{t \geq 1}$ be an $(\calF_t)_{t \geq 1}$-adapted sequence in $\R$ such that conditioned on $\calF_{t - 1}$, $\epsilon_t$ is mean zero and $\sigma$-subGaussian.  Defining $S_t := \sum_{s = 1}^t \epsilon_s f_s$ and $V_t := \sum_{s = 1}^t f_s f_s^\top$, we have that for any $\rho > 0$, the process $(M_t)_{t \geq 0}$ defined by
$$
M_t := \frac{1}{\sqrt{\det(\id_H + \rho^{-1}V_t)}}\exp\left\{\frac{1}{2}\left\|(\rho \id_H + V_t)^{-1/2}S_t/\sigma\right\|_H^2\right\}
$$
is a nonnegative supermartingale with respect to $(\calF_t)_{t \geq 0}$. Consequently, by Fact~\ref{fact:ville}, for any $\delta \in (0, 1)$, with probability at least $1 - \delta$, simultaneously for all $t \geq 1$, we have
\[
\left\|(V_t + \rho I_d)^{-1/2}S_t\right\|_H \leq \sigma\sqrt{2\log\left(\frac{1}{\delta}\sqrt{\det(\id_H +\rho^{-1}V_t)}\right)}.
\]

\end{theorem}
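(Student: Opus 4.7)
The plan is to reduce to the finite-dimensional statement in Fact~\ref{fact:mixture_finite} by a truncation-plus-limit argument, exploiting separability of $H$ to project onto the first $N$ basis vectors. Concretely, I would fix a countable orthonormal basis $(\varphi_n)_{n \geq 1}$ of $H$, let $\pi_N$ be the orthogonal projection onto $H_N := \spn\{\varphi_1, \ldots, \varphi_N\}$, and identify $H_N$ with $\R^N$ via this basis.

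For each fixed $N$, I would set $f_s^N := \pi_N f_s$, $S_t^N := \sum_{s = 1}^t \epsilon_s f_s^N = \pi_N S_t$, and $V_t^N := \sum_{s = 1}^t f_s^N (f_s^N)^\top$. Since $\pi_N$ is deterministic and the original $(f_s)$ are predictable, the $\R^N$-valued sequence $(f_s^N)$ remains $(\calF_t)$-predictable, and the $\sigma$-subGaussian hypothesis on $(\epsilon_t)$ is unchanged. Fact~\ref{fact:mixture_finite}, applied in $\R^N$, then immediately yields that
\[
M_t^N := \frac{1}{\sqrt{\det(I_N + \rho^{-1} V_t^N)}} \exp\!\left\{\frac{1}{2\sigma^2}\left\|(\rho I_N + V_t^N)^{-1/2} S_t^N\right\|_H^2\right\}
\]
is a nonnegative $(\calF_t)$-supermartingale with $M_0^N = 1$, and hence $\E M_t^N \leq 1$ uniformly in $N, t$.

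Next, I would let $N \to \infty$ and show $M_t^N \to M_t$ almost surely for each fixed $t$. Strong convergence $\pi_N \to \id_H$ forces $f_s^N \to f_s$ in $H$, hence $S_t^N \to S_t$ in $H$ and $V_t^N \to V_t$ in Hilbert-Schmidt norm (both are finite sums of at most $t$ terms, and the remainders vanish). The Fredholm determinant then satisfies $\det(\id_H + \rho^{-1} V_t^N) \to \det(\id_H + \rho^{-1} V_t)$, which one can see equivalently from convergence of the $t \times t$ Gram matrices $(\langle f_i^N, f_j^N\rangle)_{ij} \to (\langle f_i, f_j\rangle)_{ij}$. For the quadratic form, the spectra of $\rho \id_H + V_t^N$ are bounded below by $\rho > 0$, so the continuous functional calculus delivers $(\rho \id_H + V_t^N)^{-1/2} \to (\rho \id_H + V_t)^{-1/2}$ in operator norm; combining with $S_t^N \to S_t$ gives $\|(\rho \id_H + V_t^N)^{-1/2} S_t^N\|_H^2 \to \|(\rho \id_H + V_t)^{-1/2} S_t\|_H^2$. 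These two limits together yield $M_t^N \to M_t$ a.s.

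Finally, I would transfer the supermartingale property to the limit via conditional Fatou (valid since $M_t^N \geq 0$): for each $t \geq 1$,
\[
\E[M_t \mid \calF_{t-1}] \leq \liminf_{N \to \infty} \E[M_t^N \mid \calF_{t-1}] \leq \liminf_{N \to \infty} M_{t-1}^N = M_{t-1}.
\]
This makes $(M_t)_{t \geq 0}$ a nonnegative supermartingale with $\E M_0 = 1$, and applying Ville's inequality (Fact~\ref{fact:ville}) to the event $\{M_t \geq 1/\delta\}$, then taking logs and rearranging the exponent, produces the claimed uniform concentration. The main technical obstacle is the convergence step: one needs the Fredholm determinant and the self-normalized quadratic form to pass to the limit simultaneously. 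Separability is essential for the projection to be available, and the $\rho$-regularization is what keeps the inverse square root continuous, avoiding the spectral pathologies that otherwise appear when $V_t$ has arbitrarily small eigenvalues.
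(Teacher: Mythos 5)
Your proposal is correct and follows essentially the same route as the paper: truncate onto the span of the first $N$ basis vectors, apply the finite-dimensional method-of-mixtures supermartingale (Fact~\ref{fact:mixture_finite}) there, show the truncated determinant and self-normalized quadratic form converge to their infinite-dimensional counterparts (the $\rho$-regularization keeping the inverse square root operator-norm continuous, just as in the paper's Lemmata~\ref{lem:op_norm_convergence} and \ref{lem:det_convergence}), and pass the supermartingale property to the limit by conditional Fatou before invoking Ville's inequality. The only differences are cosmetic, e.g.\ you argue determinant convergence via Hilbert--Schmidt/Gram-matrix convergence where the paper uses trace-norm continuity of the Fredholm determinant for finite-rank operators.
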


We can summarize our independent proof in two simple steps. First, following from Fact~\ref{fact:mixture_finite}, the bound in Theorem~\ref{thm:mixture_hilbert} holds when we project $S_t$ and $V_t$ onto a finite number $N$ of coordinates, defining a ``truncated'' nonnegative supermartingale $M_t^{(N)}$. Secondly, we can make a limiting arugment, showing $M_t^{(N)}$ is ``essentially'' $M_t$ for large values of $N$.

\begin{proof}[Proof Sketch for Theorem~\ref{thm:mixture_hilbert}]
Let $(\varphi_n)_{n \geq 1}$ be an orthonormal basis for $H$, and, for any $N \geq 1$, let $\pi_N$ denote the projection operator onto $H_N := \spn\{\varphi_1, \dots, \varphi_N\}$. Note that the projected process $(\pi_N S_t)_{t \geq 1}$ is an $H$-valued martingale with respect to  $(\calF_t)_{t \geq 0}$. Further, note that the projected variance process $(\pi_N V_t \pi^\top_N)_{t \geq 0}$ satisfies
\[
\pi_N V_t \pi^\top_N = \sum_{s = 1}^t (\pi_N f_s)(\pi_N f_s)^\top.
\]
Since, for any $N \geq 1$, $H_N$ is a finite-dimensional Hilbert space, it follows from Lemma~\ref{lem:mixture_hilbert_finite} that the process $(M_t^{(N)})_{t \geq 0}$ given by
\[
M_t^{(N)} := \frac{1}{\sqrt{\det(\id_{H} + \rho^{-1}\pi_N V_t \pi_N^\top)}}\exp\left\{\frac{1}{2}\left\|(\rho \id_H +\pi_N V_t \pi_N^\top)^{-1/2}\pi_N S_t\right\|_{H}^2\right\},
\]
is a nonnegative supermartingale with respect to $(\calF_t)_{t \geq 0}$. One can check that, for any $t \geq 0$, $M_t^{(N)} \xrightarrow[N \rightarrow \infty]{} M_t$. Thus, Fatou's Lemma implies
\begin{align*}
    \E\left(M_t \mid \calF_{t - 1}\right) &= \E\left(\liminf_{N \rightarrow \infty}M_t^{(N)} \mid \calF_{t - 1}\right) \\
    &\leq \liminf_{N \rightarrow \infty}\E\left(M_t^{(N)} \mid \calF_{t - 1}\right) \\
    &\leq \liminf_{N \rightarrow \infty}M_{t - 1}^{(N)} \\
    &= M_{t - 1},
\end{align*}
which proves the first part of the claim. The second part of the claim follows from applying Fact~\ref{fact:ville} to the defined nonnegative supermartingale and rearranging. See Appendix~\ref{app:mix} for details.
\end{proof}

The following corollary specializes Theorem~\ref{thm:mixture_hilbert} (and thus Corollary 3.5 of \citet{abbasi2013online}) to the case where $H$ is a RKHS and $f_t = k(\cdot, X_t)$, for all $t \geq 1$. In this special case, we can reframe the above theorem in terms familiar Gram matrix $K_t$, assuming the quantity is invertible. While we prefer the simplicity and elegance of working directly in the RKHS $H$ in the sequel, the follow corollary allows us to present Theorem~\ref{thm:mixture_hilbert} in a way that is computationally tractable.

\begin{corollary}
\label{cor:gram_bd}
Let us assume the same setup as Theorem~\ref{thm:mixture_hilbert}, and additionally assume that (a) $(H, \langle\cdot, \cdot\rangle_H)$ is a RKHS associated with some kernel $k$, and (b) there is some $\calX$-valued $(\calF_t)_{t \geq 0}$-predictable process $(X_t)_{t \geq 1}$ such that $(f_t)_{t \geq 1} = (k(\cdot, X_t))_{t \geq 1}$. Then, for any $\rho > 0$ and $\delta \in (0, 1)$, we have that, with probability at least $1 - \delta$, simultaneously for all $t \geq 0$,
\begin{align*}
\left\|(V_t + \rho \id_H)^{-1/2}S_t\right\|_H \leq 
% \sigma\sqrt{2\log\left(\sqrt{\det(\id_H +\rho^{-1}V_t)}\right)} 
% = 
\sigma\sqrt{2\log\left(\sqrt{\frac{1}{\delta}\det(I_t +\rho^{-1}K_t)}\right)}.
\end{align*}
If, in addition, the Gram matrix $K_t = (k(X_i, X_j))_{i, j \in [t]}$ is invertible, we have the equality
\[
\|(I_t + \rho K_t^{-1})^{-1/2}\epsilon_{1:t}\|_2 = \|(\rho \id_H + V_t)^{-1/2}S_t\|_H .
\]
\end{corollary}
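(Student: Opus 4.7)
The plan is to reduce both claims to Theorem~\ref{thm:mixture_hilbert} plus the feature-operator algebra $V_t = \Phi_t^\top \Phi_t$, $K_t = \Phi_t \Phi_t^\top$, where $\Phi_t : H \to \R^t$ is the feature operator $f \mapsto (f(X_1),\dots,f(X_t))^\top$ and its adjoint $\Phi_t^\top : \R^t \to H$ sends $v$ to $\sum_{s=1}^t v_s k(\cdot, X_s)$. In particular, $S_t = \sum_s \epsilon_s k(\cdot, X_s) = \Phi_t^\top \epsilon_{1:t}$.

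For the inequality, I would simply invoke Theorem~\ref{thm:mixture_hilbert} with $f_t := k(\cdot, X_t)$. Predictability of $(f_t)_{t\geq 1}$ is inherited from that of $(X_t)_{t\geq 1}$, and $\|f_t\|_H = \sqrt{k(X_t, X_t)}$ is bounded by Assumption~\ref{ass:kernel}. The theorem then bounds $\|(V_t + \rho\id_H)^{-1/2} S_t\|_H$ in terms of $\det(\id_H + \rho^{-1}V_t)$; the determinant identity $\det(\id_H + \rho^{-1}V_t) = \det(I_t + \rho^{-1}K_t)$ recorded in the notation section rewrites the right-hand side in the stated Gram-matrix form.

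For the norm equality, I would expand both sides as quadratic forms in $\epsilon_{1:t}$. Using $S_t = \Phi_t^\top \epsilon_{1:t}$, the left-hand side is
\[
\|(\rho \id_H + V_t)^{-1/2} S_t\|_H^2 = \epsilon_{1:t}^\top \Phi_t(\rho \id_H + \Phi_t^\top \Phi_t)^{-1} \Phi_t^\top \epsilon_{1:t}.
\]
The key algebraic step is the push-through identity $\Phi_t(\rho \id_H + \Phi_t^\top \Phi_t)^{-1} = (\rho I_t + \Phi_t\Phi_t^\top)^{-1}\Phi_t$, which follows from rearranging the obvious intertwining $(\rho I_t + \Phi_t \Phi_t^\top)\Phi_t = \Phi_t(\rho \id_H + \Phi_t^\top \Phi_t)$ together with the invertibility (from $\rho > 0$) of both factors. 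Post-multiplying by $\Phi_t^\top$ collapses the display to $\epsilon_{1:t}^\top (\rho I_t + K_t)^{-1} K_t \epsilon_{1:t}$. On the other side, invertibility of $K_t$ gives the factorization $I_t + \rho K_t^{-1} = K_t^{-1}(K_t + \rho I_t)$, whence $(I_t + \rho K_t^{-1})^{-1} = (K_t + \rho I_t)^{-1} K_t$, producing the same quadratic form. The only point needing a moment of care is that $\rho \id_H + V_t$ is invertible as a bounded operator on the infinite-dimensional $H$; this is immediate because $V_t$ is positive semidefinite and $\rho > 0$, so $\rho \id_H + V_t$ is bounded below by $\rho$. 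There is no substantive obstacle beyond this bookkeeping.
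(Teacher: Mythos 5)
Your proposal is correct and follows essentially the same route as the paper: the inequality is Theorem~\ref{thm:mixture_hilbert} applied to $f_t = k(\cdot, X_t)$ combined with the identity $\det(\id_H + \rho^{-1}V_t) = \det(I_t + \rho^{-1}K_t)$, and the equality is the quadratic-form expansion with the push-through identity $\Phi_t(\rho\id_H + \Phi_t^\top\Phi_t)^{-1} = (\rho I_t + \Phi_t\Phi_t^\top)^{-1}\Phi_t$. The only cosmetic difference is the last step, where you obtain $(I_t + \rho K_t^{-1})^{-1} = (K_t + \rho I_t)^{-1}K_t$ by direct factorization while the paper argues via simultaneous diagonalization and the scalar identity $(1 + a^{-1})^{-1} = a(1+a)^{-1}$; the two are interchangeable.
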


We prove Corollary~\ref{cor:gram_bd} in Appendix~\ref{app:mix}. With this reframing of Theorem~\ref{thm:mixture_hilbert}, we compare the concentration results of Theorem~\ref{thm:mixture_hilbert} (and thus \citet{abbasi2013online}) to  the following, commonly leveraged result from \citet{chowdhury2017kernelized}.

\begin{fact}[\textbf{Theorem 1 from \citet{chowdhury2017kernelized}}]
\label{fact:mixture_RKHS}
Assume the same setup as Fact~\ref{fact:mixture_finite}. Let $\eta > 0$ be arbitrary, and let $K_t := (k(X_i, X_j))_{i, j \in [t]}$ be the Gram matrix corresponding to observations made by time $t \geq 1$. Then, with probability at least $1 - \delta$, simultaneously for all $t \geq 1$, we have
$$
\left\|\left((K_t + \eta I_t)^{-1} + I_t\right)^{-1/2}\epsilon_{1:t}\right\|_2 \leq \sigma\sqrt{2\log\left(\frac{1}{\delta}\sqrt{\det\left((1 + \eta)I_t + K_t\right)}\right)}.
$$

\end{fact}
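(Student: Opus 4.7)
The plan is to apply the method of mixtures with a mixing distribution tailored so that the resulting normalization produces the Gram-matrix quantity $\det((1+\eta)I_t + K_t)$ rather than the RKHS quantity $\det(\id_H + \rho^{-1}V_t)$ appearing in Theorem~\ref{thm:mixture_hilbert}. The key observation is that the stated determinant and weighting matrix $((K_t+\eta I_t)^{-1}+I_t)^{-1}$ arise exactly when the mixing prior on the value vector $v_{1:t} := (h(X_s))_{s=1}^t$ is Gaussian with covariance $\sigma^{-2}(K_t + \eta I_t)$; this prior can be realized as a GP prior on $h$ with covariance $k$ plus an independent iid Gaussian perturbation of variance $\eta$ at the design points, both rescaled by $1/\sigma$.

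The first step is to construct a parameterized family of nonnegative supermartingales. For each bounded deterministic function $h : \calX \to \R$, define
$$M_t^h \;:=\; \exp\!\left(\sum_{s=1}^t \epsilon_s h(X_s) \;-\; \frac{\sigma^2}{2}\sum_{s=1}^t h(X_s)^2\right).$$
Since $X_s$ is $\calF_{s-1}$-predictable, so is $h(X_s)$, and the conditional $\sigma$-sub-Gaussianity of $\epsilon_s$ yields $\E[\exp(\epsilon_s h(X_s) - \sigma^2 h(X_s)^2/2)\mid \calF_{s-1}] \le 1$. Hence $(M_t^h)_{t \ge 0}$ is a non-negative $(\calF_t)$-supermartingale with $M_0^h = 1$.

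The second step is to average over $h$. On an auxiliary probability factor independent of the bandit process $(\epsilon_s,X_s)_{s\ge 1}$, sample a Gaussian process $g \sim \mathrm{GP}(0, k)$ and an iid sequence $(z_s)_{s\ge 1}$ with $z_s \sim \mathcal{N}(0, \eta)$, and set $v_s := \sigma^{-1}(g(X_s) + z_s)$. Given $X_{1:t}$, the vector $v_{1:t}$ is $\mathcal{N}(0, \sigma^{-2}(K_t + \eta I_t))$. Letting $\pi$ denote the joint law of $(g, z)$, define $\bar M_t := \int M_t^v \, d\pi$. For each fixed $(g,z)$ the process $M_t^v$ remains a supermartingale (the $v_s$ are $\calF_{s-1}$-predictable and $\epsilon_s$ retains its conditional sub-Gaussianity given the independent $(g,z)$), so Fubini shows that $\bar M_t$ is itself a non-negative $(\calF_t)$-supermartingale with $\E \bar M_0 = 1$. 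Direct Gaussian integration (completing the square in $v$ against the prior density, then invoking the identity $\det(K_t + \eta I_t)\cdot\det(I_t + (K_t + \eta I_t)^{-1}) = \det((1+\eta)I_t + K_t)$) gives the closed form
$$\bar M_t \;=\; \frac{1}{\sqrt{\det((1+\eta)I_t + K_t)}}\,\exp\!\left(\frac{1}{2\sigma^2}\,\epsilon_{1:t}^{\top}\bigl((K_t + \eta I_t)^{-1} + I_t\bigr)^{-1}\epsilon_{1:t}\right).$$

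Finally, applying Ville's inequality (Fact~\ref{fact:ville}) to $\bar M_t$ yields $\bar M_t \le 1/\delta$ simultaneously for all $t \ge 1$ with probability at least $1-\delta$. Taking logarithms, multiplying by $2\sigma^2$, and taking square roots produces exactly the stated bound. The main technical obstacle is formalizing the mixing step when $X_{1:t}$ is adapted rather than deterministic; this is handled cleanly by placing $g$ and $(z_s)$ on a probability factor independent of $\calF_\infty$, and noting that only the finite-dimensional marginal law of $v_{1:t}$ given $X_{1:t}$ enters the integral, so no issues arise from GP sample paths failing to lie in $H$.
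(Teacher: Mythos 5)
This statement is imported as a \textbf{Fact} from \citet{chowdhury2017kernelized}; the paper gives no proof of it and only describes the original argument as a ``double mixture.'' Your proposal is, in essence, a correct reconstruction of exactly that double-mixture proof: the two mixtures are the GP prior $g \sim \GP(0,k)$ and the independent $\calN(0,\eta)$ perturbations $(z_s)$, and your Gaussian integration, the determinant identity $\det(K_t+\eta I_t)\det(I_t+(K_t+\eta I_t)^{-1})=\det((1+\eta)I_t+K_t)$, and the final application of Ville's inequality all check out. Two minor points worth tightening: (i) your Step 1 is stated for a fixed function $h:\calX\to\R$, but the mixed object $v_s=\sigma^{-1}(g(X_s)+z_s)$ is a sequence-indexed predictable process (the $z_s$ depend on $s$, not on $x$), so the supermartingale lemma should be stated for an arbitrary $(\calF_{s-1})$-measurable real sequence $(v_s)$ after enlarging the filtration by the independent auxiliary randomness at time $0$ --- which is what you implicitly do; (ii) boundedness of $h$ is neither available for GP sample paths nor needed, since the one-step inequality $\E[\exp(\epsilon_s v_s-\sigma^2 v_s^2/2)\mid\calF_{s-1}]\le 1$ holds for any finite $\calF_{s-1}$-measurable $v_s$ and Tonelli applies to the nonnegative integrand. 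It is worth being aware of the contrast the paper draws: this double mixture yields a purely Gram-matrix statement with no Hilbert-space machinery, but at the price of the doubly-regularized normalizer $((K_t+\eta I_t)^{-1}+I_t)^{-1/2}$, whose dependence on $\eta$ is awkward to tune; the paper's preferred route (Theorem~\ref{thm:mixture_hilbert}, proved by projecting onto finitely many basis elements and applying the finite-dimensional method of mixtures) instead produces a bound with a transparent dependence on the regularizer $\rho$, which is what enables the improved regret analysis.
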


To make comparison with this bound clear, we parameterize the bounds in the above fact in terms of $\eta > 0$ instead of $\rho > 0$ to emphasize the following difference: both sides of the bound presented in Theorem~\ref{thm:mixture_hilbert} shrink as $\rho$ is increased, whereas both sides of the bound in Fact~\ref{fact:mixture_RKHS} increase as $\eta$ grows. Thus, increasing $\rho$ in Theorem~\ref{thm:mixture_hilbert} should be seen as decreasing $\eta$ in the bound of \citet{chowdhury2017kernelized}. The bounds in Corollary~\ref{cor:gram_bd} and Fact~\ref{fact:mixture_RKHS} coincide when $\rho = 1$ and $\eta \downarrow 0$ (per Lemma 1 in \citet{chowdhury2017kernelized}), but are otherwise not equivalent for other choices of $\rho$ and $\eta$. 

We believe Theorem~\ref{thm:mixture_hilbert} and Corollary 3.5 of \citet{abbasi2013online} to be signficantly more usable than the result of \citet{chowdhury2017kernelized} for several reasons. First, the aforementioned bounds \textit{directly} extend the method of mixtures (in particular, Fact~\ref{fact:mixture_finite}) to potentially infinite-dimensional Hilbert spaces. This similarity in form allows us to leverage existing analysis of \citet{abbasi2011improved} to prove our regret bounds, with only slight modifications. This is in contrast to the more cumbersome regret analysis that leverages Fact~\ref{fact:mixture_RKHS}, which is not only more difficult to follow, but also obtains inferior, sometimes super-linear regret guarantees.

Second, we note that Theorem~\ref{thm:mixture_hilbert} provides a bound that has a simple dependence on $\rho > 0$. In more detail, directly as a byproduct of the simplified bounds, Theorem~\ref{thm:reg} offers a regret bound that can readily be tuned in terms of $\rho$. Due to their use of a ``double mixture'' technique in proving Fact~\ref{fact:mixture_RKHS}, \citet{chowdhury2017kernelized} essentially wind up with a nested, doubly-regularized matrix $((K_t + \eta I_t)^{-1} + I_t)^{-1/2}$ with which they normalize the residuals $\epsilon_{1:t}$. In particular, this more complicated normalization make it difficult to understand how varying $\eta$ impacts regret guarantees, which we find to be essential for proving improved regret guarantees.

We note that the central bound discussed in this section \textit{does not} provide an improvement in dependence on maximum information gain in the sense hypothesized by \citet{vakili2021open}. In particular, the authors hypothesized the possibility of shaving a $\sqrt{\gamma_T}$ multiplicative factor off of self-normalized concentration inequalities in RKHS's. This was shown in a recent work (see \citet{lattimore2023lower}) to be impossible in general. Instead, Theorem~\ref{thm:mixture_hilbert} and Corollary 3.5 of \citet{abbasi2013online} give one access to a family of bounds parameterized by the regularization parameter $\rho > 0$. As will be seen in the sequel, by optimizing over this parameter, one can obtain significant improvements in regret.

\begin{comment}
Note that both sides of our bound shrink with $\rho$, approaching $0$ as $\rho$ approaches infinity. This is expected, as regularization serves to decrease the impact of noise (which is the term we are bounding) on the regression estimate, thus preventing overfitting. Since both sides of the bound in Fact~\ref{fact:mixture_RKHS} increase when $\rho$ grows, to make a fair comparison, we must consider the regime where $\rho$ tends toward 0. In this case, the bound reduces to
\begin{equation}
\label{ineq:bad_bd}
\|(K_t^{-1} + I_t)^{-1/2}\epsilon_{1:t}\|_2 \leq \sigma\sqrt{2\log\left(\frac{1}{\delta}\sqrt{\det(I_t + K_t)}\right)}.
\end{equation}
Note that this is precisely equivalent to Corollary~\ref{cor:gram_bd} when $\rho$ is set to 1. However, contrary to intuition, no setting of $\rho$ causes both sides of the bound in Fact~\ref{fact:mixture_RKHS} to tend towards zero. This ultimately prevents the use of appropriate regularization as a tool for minimizing regret, as in the sequel we will be selecting $\rho$ in proportion to the time horizon $T$, thus causing both sides of the bound to tend towards zero. We go in greater depth in Section~\ref{sec:reg} in showing how to appropriately regularize GP-UCB.
\end{comment}
%With the above comparisons in mind, we now present our proof of Theorem~\ref{thm:mixture_hilbert}. In our proof, we cite several technical lemmas that pedantic and thus uninteresting in nature. These can be found in Appendix~\ref{app:mix}

\section{An Improved Regret Analysis of GP-UCB}
\label{sec:reg}

In this section, we provide the second of our main contributions, which is an improved regret analysis for the GP-UCB algorithm. We provide a description of GP-UCB in Algorithm~\ref{alg:UCB}. While we state the algorithm directly in terms of quantities in the RKHS $H$, these quantities can be readily converted to those involving Gram matrices or Gaussian processes for those who prefer that perspective~\citep{chowdhury2017kernelized, williams2006gaussian}. 

As seen in Section~\ref{sec:bound}, by carefully extending the ``method of mixtures'' technique (originally by Robbins)  of \citet{abbasi2011improved, abbasi2013online} and \citet{de2004self, de2007pseudo} to Hilbert spaces, we can construct self-normalized concentration inequalities that have simple dependence on the regularization parameter $\rho$. These simplified bounds, in conjunction with information about the eigendecay of the kernel $k$ \citep{vakili2021information}, can be combined to carefully choose $\rho$ to obtain improved regret. We now present our main result.
%Going forward, we make the following typical assumption on the problem instance.

\begin{algorithm}

\caption{Gaussian Process Upper Confidence Bound (GP-UCB)}
\begin{algorithmic}
\Require Regularization parameter $\rho > 0$, norm bound $D$, confidence bounds $(U_t)_{t \geq 1}$, and time horizon $T$.
\State Set $V_0 := \rho \id_H$, $f_0 := 0$, $\Epsilon_0 := \left\{f \in H : \|f\|_H \leq D\right\}$
\For{$t = 1, \dots, T$}
    \State Let $(X_t, \wt{f}_t) := \arg\max_{x \in \calX, f \in \Epsilon_{t - 1}}(f, k(\cdot, x))_H$
    \State Play action $X_t$ and observe reward $Y_t := f^\ast(X_t) + \epsilon_t$
    \State Set $V_t := V_{t - 1} + k(\cdot, X_t)k(\cdot, X_t)^\top$ and $f_t := (V_t + \rho \id_H)^{-1}\Phi_t^\top Y_{1:t}$
    \State Set $\Epsilon_t := \left\{f \in H : \left\|(V_t + \rho\id_H)^{1/2}(f_{t} - f)\right\|_H \leq U_t\right\}$
\EndFor

\end{algorithmic}
\label{alg:UCB}

\end{algorithm}
\begin{comment}
\begin{ass}
\label{ass:regret}
We assume that (a) there is some constant $D > 0$ known to the learner such that $\|f^\ast\|_H \leq D$ and (b) for every $t \geq 1$, $\epsilon_t$ is $\sigma$-subGaussian conditioned on $\sigma(Y_{1:t - 1}, X_{1:t})$.

\end{ass}

With the above paragraph in mind, we now present the main result of our work.
\end{comment}
\begin{theorem}
\label{thm:reg}
Let $T > 0$ be a fixed time horizon, $\rho > 0$ a regularization parameter, and assume Assumptions~\ref{ass:kernel} and \ref{ass:regret} hold. Let $\delta \in (0, 1)$, and for $t \geq 1$ define 
\[ U_t := \sigma\sqrt{2\log\left(\frac{1}{\delta}\sqrt{\det(\id_H + \rho^{-1}V_t)}\right)} + \rho^{1/2}D.
\]
Then, with probability at least $1 - \delta$, the regret of Algorithm~\ref{alg:UCB} run with parameters $\rho, (U_t)_{t \geq 1}, D$ satisfies
\[
R_T = O\left(\gamma_T(\rho)\sqrt{T} + \sqrt{\rho\gamma_T(\rho)T}\right),
\]
where in the big-Oh notation above we treat $\delta, D, \sigma, B$, and $L$ as being held constant. If the kernel $k$ experiences $(C, \beta)$-polynomial eigendecay for some $C > 0$ and $\beta > 1$, taking $\rho = O(T^{\frac{1}{1 + \beta}})$ yields $ R_T = \widetilde{O}\left(T^{\frac{3 + \beta}{2 + 2\beta}}\right)$\footnote{The notation $\wt{O}$ suppresses multiplicative, poly-logarithmic factors in $T$}, which is always sub-linear in $T$.

\end{theorem}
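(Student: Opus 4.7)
The plan is to follow the standard optimism/confidence-ellipsoid regret decomposition, with Theorem~\ref{thm:mixture_hilbert} providing the noise-term control and with the regularization parameter $\rho$ tracked as a free knob to be optimized at the end. First I would verify that $f^\ast \in \Epsilon_t$ for all $t \geq 0$ on a high-probability event. The standard decomposition of the kernel ridge estimator gives
\[
f_t - f^\ast = (V_t + \rho\id_H)^{-1} S_t - \rho(V_t + \rho\id_H)^{-1} f^\ast,
\]
where $S_t := \sum_{s \leq t} \epsilon_s k(\cdot, X_s)$. Applying the triangle inequality to $\|(V_t+\rho\id_H)^{1/2}(f_t - f^\ast)\|_H$, Theorem~\ref{thm:mixture_hilbert} handles the first summand, and the second is bounded by $\rho^{1/2}\|f^\ast\|_H \leq \rho^{1/2}D$ since the top eigenvalue of $(V_t + \rho \id_H)^{-1/2}$ is at most $\rho^{-1/2}$. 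Together these give $\|(V_t + \rho \id_H)^{1/2}(f_t - f^\ast)\|_H \leq U_t$ simultaneously for all $t$ with probability at least $1 - \delta$, so $f^\ast \in \Epsilon_t$ throughout.

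Next I would bound the instantaneous regret $r_t := f^\ast(x^\ast) - f^\ast(X_t)$. By optimism (since $(X_t, \wt{f}_t)$ maximizes $\langle f, k(\cdot, x)\rangle_H$ over $\Epsilon_{t-1} \times \calX$ and, on the good event, $f^\ast \in \Epsilon_{t-1}$), we have $\langle \wt{f}_t, k(\cdot, X_t)\rangle_H \geq f^\ast(x^\ast)$, so by Cauchy--Schwarz in the $(V_{t-1} + \rho \id_H)$-weighted inner product,
\[
r_t \leq \langle \wt{f}_t - f^\ast, k(\cdot, X_t)\rangle_H \leq 2 U_{t-1} \|k(\cdot, X_t)\|_{(V_{t-1} + \rho\id_H)^{-1}},
\]
using that both $\wt{f}_t, f^\ast \in \Epsilon_{t-1}$. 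Summing over $t$, applying Cauchy--Schwarz again, and exploiting the monotonicity of $U_t$,
\[
R_T \leq 2 U_T \sqrt{T \sum_{t=1}^T \|k(\cdot, X_t)\|_{(V_{t-1} + \rho \id_H)^{-1}}^2}.
\]

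The main technical step is the elliptical-potential bound on this last sum. Since $\|k(\cdot, X_t)\|_H^2 \leq L$ by Assumption~\ref{ass:kernel}, each summand is at most $L/\rho$; then the inequality $x \leq \frac{L/\rho}{\log(1 + L/\rho)} \log(1 + x)$ for $x \in [0, L/\rho]$, combined with the telescoping identity $\sum_t \log(1 + \|k(\cdot, X_t)\|_{(V_{t-1} + \rho\id_H)^{-1}}^2) = \log\det(\id_H + \rho^{-1}V_T) \leq 2\gamma_T(\rho)$ (via rank-one Fredholm-determinant updates), yields $\sum_t \|k(\cdot, X_t)\|_{(V_{t-1}+\rho\id_H)^{-1}}^2 = O(\gamma_T(\rho))$ for $\rho$ bounded away from zero. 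Plugging $U_T = O(\sigma\sqrt{\gamma_T(\rho)} + \rho^{1/2}D)$ into the previous display delivers the claimed $R_T = O(\gamma_T(\rho)\sqrt{T} + \sqrt{\rho\gamma_T(\rho)T})$.

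For the concrete rate, I would substitute the polynomial-eigendecay bound $\gamma_T(\rho) = \wt{O}((T/\rho)^{1/\beta})$ from Fact~\ref{fact:edecay}, obtaining two terms of the form $T^{1/2+1/\beta}\rho^{-1/\beta}$ and $\rho^{(\beta-1)/(2\beta)} T^{(\beta+1)/(2\beta)}$; a direct exponent calculation shows these balance at $\rho = T^{1/(1+\beta)}$, both equal to $\wt{O}(T^{(3+\beta)/(2+2\beta)})$, which is strictly sublinear iff $\beta > 1$. The main obstacle I anticipate is ensuring that the $\rho$-dependence of the elliptical-potential prefactor is harmless: existing analyses treat $\rho = O(1)$ and absorb it into a constant, but here $\rho \to \infty$ with $T$ and one must verify that $\frac{L/\rho}{\log(1 + L/\rho)}$ remains bounded (it in fact tends to $1$ as $\rho \to \infty$, so this is benign). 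A secondary subtlety is that tuning $\rho$ requires $T$ to be known in advance, which matches the theorem's fixed-horizon setup.
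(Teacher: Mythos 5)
Your proposal is correct and follows essentially the same route as the paper: the confidence-set decomposition $f_t - f^\ast = (V_t+\rho\id_H)^{-1}S_t - \rho(V_t+\rho\id_H)^{-1}f^\ast$ with Theorem~\ref{thm:mixture_hilbert} plus the $\rho^{1/2}D$ bias bound, optimism and weighted Cauchy--Schwarz for the instantaneous regret, an elliptical-potential bound via the Fredholm-determinant telescoping identity, and balancing $\rho = T^{1/(1+\beta)}$ against Fact~\ref{fact:edecay}. The only (cosmetic) difference is your elliptical-potential constant $\tfrac{L/\rho}{\log(1+L/\rho)}$ in place of the paper's restriction $\rho \geq 1 \lor L$ with $x \leq 2\log(1+x)$, which is equally valid in the relevant regime.
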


While we present the above bound with a fixed time-horizon, it can be made anytime by carefully applying a standard doubling argument (see \citet{lattimore2020bandit}, for instance). We specialize the above theorem to the case of the  Mat\'ern kernel in the following corollary.

\begin{corollary}
\label{cor:matern}
Definition~\ref{def:matern} states that the Mat\'ern kernel with smoothness $\nu > 1/2$ in dimension $d$ experiences $(C, \frac{2\nu + d}{d})$-eigendecay, for some constnat $C > 0$. Thus, GP-UCB obtains a regret rate of $R_T = \wt{O}\left(T^{\frac{\nu + 2d}{2\nu + 2d}}\right)$.
\end{corollary}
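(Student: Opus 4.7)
My plan is to follow the standard optimism-based template of \citet{abbasi2011improved, chowdhury2017kernelized}, but thread Theorem~\ref{thm:mixture_hilbert} through it so that the regularization parameter $\rho$ can be tuned at the end. Step one is the confidence ellipsoid. I would substitute $Y_{1:t} = \Phi_t f^* + \epsilon_{1:t}$ into the definition of $f_t$ in Algorithm~\ref{alg:UCB} and simplify to obtain the identity
\[
(V_t + \rho\id_H)^{1/2}(f_t - f^*) = (V_t + \rho\id_H)^{-1/2}(S_t - \rho f^*),
\]
where $S_t := \sum_{s\leq t} \epsilon_s k(\cdot, X_s)$. The triangle inequality splits this into a noise term, controlled by Theorem~\ref{thm:mixture_hilbert} uniformly in $t$, and a bias term bounded by $\rho^{1/2}\|f^*\|_H \leq \rho^{1/2}D$ using $(V_t + \rho\id_H)^{-1} \preceq \rho^{-1}\id_H$. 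This matches the stated $U_t$ exactly, so $f^* \in \Epsilon_t$ for all $t$ with probability at least $1-\delta$.

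On that event, optimism is immediate: since $(x^*, f^*) \in \calX \times \Epsilon_{t-1}$, the algorithm's choice satisfies $\wt{f}_t(X_t) \geq f^*(x^*)$, so the per-round regret obeys $r_t \leq \langle \wt{f}_t - f^*, k(\cdot, X_t)\rangle_H$. Cauchy-Schwarz in the $(V_{t-1}+\rho\id_H)$-weighted inner product, together with $\wt{f}_t, f^* \in \Epsilon_{t-1}$, bounds this by $2U_{t-1}\cdot \sigma_{t-1}(X_t)$, where $\sigma_{t-1}(x) := \|k(\cdot, x)\|_{(V_{t-1}+\rho\id_H)^{-1}}$. The usual rank-one Fredholm determinant update gives $\det(V_t+\rho\id_H) = \det(V_{t-1}+\rho\id_H)(1+\sigma_{t-1}(X_t)^2)$, which telescopes to the elliptical-potential identity $\sum_t \log(1+\sigma_{t-1}(X_t)^2) = \log\det(\id_H + \rho^{-1}V_T) \leq 2\gamma_T(\rho)$. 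Because $\sigma_{t-1}(X_t)^2 \leq L/\rho$ by Assumption~\ref{ass:kernel}, the calculus inequality $y \leq \tfrac{L/\rho}{\log(1+L/\rho)}\log(1+y)$ on $[0, L/\rho]$ converts this into $\sum_t \sigma_{t-1}(X_t)^2 = O(\gamma_T(\rho))$, with the hidden constant bounded whenever $\rho \gtrsim L$. A final Cauchy-Schwarz in $t$ and substitution of $U_T = O(\sigma\sqrt{\gamma_T(\rho)} + \sqrt{\rho}D)$ then give the first claim $R_T = O(\gamma_T(\rho)\sqrt{T} + \sqrt{\rho\gamma_T(\rho) T})$.

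For the polynomial-eigendecay specialization, I would invoke Fact~\ref{fact:edecay} to get $\gamma_T(\rho) = \wt{O}((T/\rho)^{1/\beta})$, reducing the problem to balancing two explicit powers of $T$ and $\rho$; setting $\rho = T^{1/(1+\beta)}$ equalizes both summands at $\wt{O}(T^{(3+\beta)/(2+2\beta)})$, which is sublinear for every $\beta > 1$. The main obstacle is not any single step — each piece is a mild generalization of the linear-bandit argument — but rather the three-way balance between the noise component $\sigma\sqrt{\gamma_T(\rho)}$, the bias component $\sqrt{\rho}D$, and the elliptical-potential conversion constant that scales like $L/\rho$: classical analyses fix $\rho$ as a small constant and thereby miss this balance entirely, which is precisely why Theorem~\ref{thm:mixture_hilbert}, with its clean multiplicative dependence on $\rho$ on both sides, is essential. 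Verifying that the chosen $\rho = T^{1/(1+\beta)}$ simultaneously keeps all three terms in check is the one point where I expect to have to be most careful.
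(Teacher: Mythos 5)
Your write-up is correct, and it is essentially the paper's own argument: what you have reconstructed is the proof of Theorem~\ref{thm:reg} (confidence set via Theorem~\ref{thm:mixture_hilbert} as in Lemma~\ref{lem:conf_set}, optimism plus weighted Cauchy--Schwarz, the elliptical-potential identity of Lemma~\ref{lem:CS_det_bd}, and the tuning $\rho = T^{1/(1+\beta)}$ via Fact~\ref{fact:edecay}). The one step you never carry out, though, is the step that constitutes Corollary~\ref{cor:matern} itself: specializing to the Mat\'ern kernel. To close the argument you should invoke Definition~\ref{def:matern}, which gives $(C,\beta)$-polynomial eigendecay with $\beta = \frac{2\nu+d}{d}$ (note $\beta > 1$ for every $\nu > 1/2$ and $d \geq 1$, so the sublinearity condition of Theorem~\ref{thm:reg} applies), and then do the exponent arithmetic
\[
\left.\frac{3+\beta}{2+2\beta}\right|_{\beta = \frac{2\nu+d}{d}} \;=\; \frac{3d + (2\nu+d)}{2d + 2(2\nu+d)} \;=\; \frac{2\nu + 4d}{4\nu + 4d} \;=\; \frac{\nu + 2d}{2\nu + 2d},
\]
which yields the claimed rate $R_T = \wt{O}\bigl(T^{\frac{\nu+2d}{2\nu+2d}}\bigr)$. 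Two minor remarks: your potential-conversion inequality $y \leq \frac{L/\rho}{\log(1+L/\rho)}\log(1+y)$ on $[0, L/\rho]$ is a harmless variant of the paper's $x \leq 2\log(1+x)$ under $\rho \geq 1 \lor L$, and is valid here since $\rho = T^{1/(1+\beta)}$ eventually dominates $L$; and the ``three-way balance'' you worry about is not delicate, because as $\rho$ grows the conversion constant $\frac{L/\rho}{\log(1+L/\rho)}$ tends to $1$, so only the two terms $\gamma_T(\rho)\sqrt{T}$ and $\sqrt{\rho\gamma_T(\rho)T}$ need to be equalized, exactly as you do.
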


We note that our regret analysis is the first to show that GP-UCB attains sublinear regret for general kernels experiencing polynomial eigendecay. Of particular import is that Corollary~\ref{cor:matern} of Theorem~\ref{thm:reg} yields the first analysis of GP-UCB that implies sublinear regret for the Mat\'ern kernel under general settings of ambient dimension $d$ and smoothness $\nu$. A recent result by \citet{janz2022sequential}, using a uniform lengthscale argument, demonstrates that GP-UCB obtains sublinear regret for the specific case of the Mat\'ern family when the parameter $\nu$ and dimension $d$ satisfy a uniform boundedness condition independent of scale. Our results are (a) more general, holding for \textit{any} kernel exhibiting polynomial eigendecay, (b) don't require checking uniform boundedness independent of scale condition, and (c) follow from a simple regularization based argument. In particular, the arguments of \citet{janz2022sequential} require advanced functional analytic and Fourier analytic machinery.

We note that our analysis does not obtain optimal regret, as the theoretically interesting but computationally cumbersome SupKernelUCB algorithm \citep{scarlett2017lower, valko2013finite} obtains a slightly improved regret bound of $\wt{O}\left(T^{\frac{\beta + 1}{2\beta}}\right)$ for $(C, \beta)$-polynomial eigendecay and $\wt{O}\left(T^{\frac{\nu + d}{2\nu + d}}\right)$ for the Mat\'ern kernel with smoothness $\nu$ in dimension $d$. Due to the aforementioned result of \citet{lattimore2023lower}, which shows that improved dependence on maximum information gain cannot be generally obtained in Hilbert space concentration, we believe further improvements on regret analysis for GP-UCB may not possible.

To wrap up this section, we provide a proof sketch for Theorem~\ref{thm:reg}. The entire proof, along with full statements and proofs of the technical lemmas, can be found in Appendix~\ref{app:reg}.

\begin{proof}[\textbf{Proof Sketch for Theorem~\ref{thm:reg}}]
Letting, for any $t \in [T]$, the ``instantaneous regret'' be defined as $r_t := f^\ast(x^\ast) - f^\ast(X_t)$, a standard argument yields that, with probability at least $1 - \delta$, simultaneously for all $t \in [T]$,
\[
r_t \leq 2 U_{t - 1}\left\|(\rho \id_H + V_{t - 1})^{-1/2}k(\cdot, X_t)\right\|_H.
\]
A further standard argument using Cauchy-Schwarz and an elliptical potential argument yields
\begin{align*}
R_T &= \sum_{t = 1}^T r_t \leq U_T\sqrt{2T\log\det(\id_H + \rho^{-1}V_T)} \\
&= \left(\sigma\sqrt{2\log\left(\frac{1}{\delta}\sqrt{\det(\id_H + \rho^{-1}V_T)}\right)} + \rho^{1/2}D\right)\sqrt{2T\log\det(\id_H + \rho^{-1}V_T)} \\
&\leq \left(\sigma\sqrt{2\log(1/\delta)} + \sigma\sqrt{2\gamma_T(\rho)} + \rho^{1/2}D\right)\sqrt{4T\gamma_T(\rho)} = O\left(\gamma_T(\rho)\sqrt{T} + \sqrt{\rho\gamma_T(\rho)T}\right),
\end{align*}
which proves the first part of the claim. If, additionally, $k$ experiences $(C, \beta)$-polynomial eigendecay, we know that $\gamma_T(\rho) = \wt{O}\left(\left(\frac{T}{\rho}\right)^{1/\beta}\right)$ by Fact~\ref{fact:edecay}. Setting $\rho := O(T^{\frac{1}{1 + \beta}})$ thus yields
\[
R_T = O\left(\gamma_T(\rho)\sqrt{T} + \sqrt{\rho\gamma_T(\rho)T}\right) = \wt{O}\left(T^{\frac{3 + \beta}{2 + 2\beta}}\right),
\]
proving the second part of the claim.

\end{proof}

\section{Conclusion}

In this work, we present an improved analysis for the GP-UCB algorithm in the kernelized bandit problem. We provide the first analysis showing that GP-UCB obtains sublinear regret when the underlying kernel $k$ experiences polynomial eigendecay, which in particular implies sublinear regret rates for the practically relevant Mat\'ern kernel. In particular, we show GP-UCB obtains regret $\wt{O}\left(T^{\frac{3 + \beta}{2 + 2\beta}}\right)$ when $k$ experiences $(C, \beta)$-polynomial eigendecay, and regret $\wt{O}\left(T^{\frac{\nu + 2d}{2\nu + 2d}}\right)$ for the Mat\'ern kernel with smoothness $\nu$ in dimension $d$.

Our contributions are twofold. First, we show the importance of finding the ``right'' concentration inequality for tackling problems in online learning --- in this case the correct bound being a self-normalized inequality originally due to \citet{abbasi2013online}. We provide an independent proof of a result equivalent to Corollary 3.5 of \citet{abbasi2013online} in Theorem~\ref{thm:mixture_hilbert}, and hope that our simplified, truncation-based analysis will make the result more accessible to researchers working on problems in kernelized learning. Second, we demonstrate the importance of regularization in the kernelized bandit problem. In particular, since the smoothness of the kernel $k$ governs the hardness of learning, by regularizing in proportion to the rate of eigendecay of $k$, one can obtain significantly improved regret bounds.

A shortcoming of our work is that, despite obtaining the first generally sublinear regret bounds for GP-UCB, our rates are not optimal. In particular, there are discretization-based algorithms, such as SupKernelUCB \citep{valko2013finite}, which obtain slightly better regret bounds of $\wt{O}\left(T^{\frac{1 + \beta}{2\beta}}\right)$ for $(C, \beta)$-polynomial eigendecay. We hypothesize that the vanilla GP-UCB algorithm, which involves constructing confidence ellipsoids directly in the RKHS $H$, cannot obtain this rate. 

The common line of reasoning \citep{vakili2021open} is that because the Lin-UCB (the equivalent algorithm in $\R^d$) obtains the optimal regret rate of $\wt{O}(d\sqrt{T})$ in the linear bandit problem setting, then GP-UCB should attain optimal regret as well. In the linear bandit setting, there is no subtlety between estimating the optimal action and unknown slope vector, as these are one and the same. In the kernel bandit setting, estimating the function and optimal action are not equivalent tasks. In particular, the former serves in essence as a nuisance parameter in estimating the latter: tight estimation of unknown function under the Hilbert space norm implies tight estimation of the optimal action, but not the other way around. Existing optimal algorithms are successful because they discretize the input domain, which has finite metric dimension \citep{shekhar2018gaussian}, and make no attempts to estimate the unknown function in RKHS norm. Since compact sets in RKHS's do not, in general, have finite metric dimension \citep{wainwright2019high}, this makes estimation of the unknown function a strictly more difficult task. In fact, recent work by \citet{lattimore2023lower} demonstrate that self-normalized concentration in RKHS's, in general, cannot exhibit improved dependence on maximum information gain. This further supports our hypothesis on the further unimprovability of the regret analysis of GP-UCB past the improvements made in this paper.
\section{Acknowledgements}
 AR acknowledges support from NSF DMS 1916320 and an ARL IoBT CRA grant. Research reported in this paper was sponsored in part by the DEVCOM Army Research Laboratory under Cooperative Agreement W911NF-17-2-0196 (ARL IoBT CRA). The views and conclusions contained in this document are those of the authors and should not be interpreted as representing the official policies, either expressed or implied, of the Army Research Laboratory or the U.S. Government. The U.S. Government is authorized to reproduce and distribute reprints for Government purposes notwithstanding any copyright notation herein.

ZSW and JW were supported in part by the NSF CNS2120667, a CyLab 2021 grant, a Google Faculty Research Award, and a Mozilla Research Grant.

JW acknowledges support from NSF GRFP grants DGE1745016 and DGE2140739.

We also would like to thank Xingyu Zhou and Johannes Kirschner for independently bringing to our attention the result from \citet{abbasi2013online} (Corollary 3.5) on self-normalized concentration in Hilbert spaces which is essentially equivalent to Theorem~\ref{thm:mixture_hilbert}. We have rewritten the paper in a way that emphasizes the importance of this result and provides proper attribution to the original author.

\bibliography{bib.bib}{}
\bibliographystyle{plainnat}

\appendix
\newpage

\section{Related Work}
\label{app:related}
The kernelized bandit problem was first studied by \citet{srinivas2009gaussian}, who introduce the GP-UCB algorithm and characterize its regret in both the Bayesian and Frequentist setting. While the authors demonstrate that GP-UCB obtains sublinear regret in the Bayesian setting for the commonly used kernels, their bounds fail to be sublinear in general in the frequentist setting for the Mat\'ern kernel, one of the most popular kernel choices in practice. \citet{chowdhury2017kernelized} further study the performance of GP-UCB in the frequentist setting. In particular, by leveraging a martingale-based ``double mixture'' argument, the authors are able to significantly simplify the confidence bounds presented in \citet{srinivas2009gaussian}. Unfortunately, the arguments introduced by \citet{chowdhury2017kernelized} did not improve regret bounds beyond logarithmic factors, and thus GP-UCB continued to fail to obtain sublinear regret for certain kernels in their work. Lastly, \citet{janz2022sequential} are able to obtain sublinear regret guarantees for certain parameter settings of the Mat\'ern kernel --- in particular in settings where the eigenfunctions of the Hilbert-Schmidt operator associated with the kernel are uniformly bounded independent of scale (Definition 28 in the cited work). 

There are many other algorithms that have been created for kernelized bandits. \citet{janz2020bandit} introduce an algorithm specific to the Mat\'ern kernel that obtains significantly improved regret over GP-UCB. This algorithm adaptively partitions the input domain into small hypercubes and running an instance of GP-UCB in each element of the discretized domain. \citet{shekhar2022instance} introduce an algorithm called LP-GP-UCB, which augments the GP-UCB estimator with local polynomial corrections. While in the worst case this algorithm recovers the regret bound of \citet{chowdhury2017kernelized}, if additional information is known about the unknown function $f^\ast$ (e.g. it is Holder continuous), it can provide improved regret guarantees. Perhaps the most important non-GP-UCB algorithm in the literature is the SupKernel algorithm introduced by \citet{valko2013finite}, which discretizes the input domain and successively eliminates actions from play. This algorithm is signficant because, despite its complicated nature, it obtains regret rates that match known lower bounds provided by \citet{scarlett2017lower} up to logarithmic factors.

Intimately tied to the kernelized bandit problem is the information-theoretic quantity of maximum information gain~\citep{cover1991information, srinivas2009gaussian}, which is a sequential, kernel-specific measure of hardness of learning. Almost all preceding algorithms provide regret bounds in terms of the max information gain.
% \citep{srinivas2009gaussian, chowdhury2017kernelized, valko2013finite, janz2020bandit}. 
Of particular import for our paper is the work of \citet{vakili2021information}. In this work, the authors use a truncation argument to upper bound the maximum information gain of kernels in terms of their eigendecay. We directly employ these bounds in our improved analysis of GP-UCB. The max-information gain bounds presented in \citet{vakili2021information} can be coupled with the regret analysis in \citet{chowdhury2017kernelized} to yield a regret bound of $\wt{O}\left(T^{\frac{\nu + 3d/2}{2\nu + d}}\right)$ in the case of the Mat\'ern kernel with smoothness $\nu$ in dimension $d$. In particular, when $\nu \leq \frac{d}{2}$, this regret bound fails to be sublinear. In practical setting, $d$ is viewed as large and $\nu$ is taken to be $3/2$ or $5/2$, making these bounds vacuous~\citep{shekhar2018gaussian, williams2006gaussian} The regret bounds in this paper are sublinear for \textit{any} selection of smoothness $\nu > \frac{1}{2}$ and $d \geq 1$. Moreover, a simple computation yields that our regret bounds strictly improve over (in terms of $d$ and $\nu$) those implied by \citet{vakili2021information}.

Last, we touch upon the topic of self-normalized concentration, which is an integral tool for constructing confidence bounds in UCB-like algorithms. Heuristically, self-normalized aims to sequentially control the growth of processes that have been rescaled by their variance to look, roughly speaking, normally (or subGaussian) distributed. The prototypical example of self-normalized concentration in the bandit literature comes from \citet{abbasi2011improved}, wherein the authors use a well known technique called the ``method of mixtures'' to construct confidence ellipsoids for finite dimensional online regression estimates. The concentration result in the aforementioned work is a specialization of results in \citet{de2004self}, which provide self-normalized concentration for a wide variety of martingale-related processes, several of which have been recently improved~\citep{howard2020time}. In a work that is largely overlooked in the kernel bandit community, \citet{abbasi2013online} extend their concentration result from \citet{abbasi2011improved} to separable Hilbert spaces by using advanced functional analytic machinery. The bound we present in this work is equivalent to the aforementioned bound in separable Hilbert spaces --- we provide an independent, simpler proof that avoids needing advanced tools from functional analysis. Perhaps the best-known result on concentration in Hilbert spaces is that of \citet{chowdhury2017kernelized}, who extend the results of \citet{abbasi2011improved} to the kernel setting using a ``double mixture'' technique, allowing them to construct self-normalized concentration inequalities for infinite-dimensional processes in RKHS's. This bound has historically been used in analyzing kernel bandit algorithms, although as we show in this work the bound of \citet{abbasi2013online} (which we independently derive in Theorem~\ref{thm:mixture_hilbert}) is perhaps better suited for online kernelized learning problems.
\section{Technical Lemmas for Theorem~\ref{thm:mixture_hilbert}}
\label{app:mix}
In this appendix, prove Theorem~\ref{thm:mixture_hilbert} along with several corresponding technical lemmas. While many of the following results are intuitively true, we provide their proofs in full rigor, as there can be subtleties when working in infinite-dimensional spaces. Throughout, we assume that the subGaussian noise parameter is $\sigma = 1$. The general case can readily be recovered by considering the rescaled process $(S_t/\sigma)_{t \geq 0}$.

The first lemma we present is a restriction of Theorem~\ref{thm:mixture_hilbert} to the case where the underlying Hilbert space $(H, \langle \cdot, \cdot\rangle_H)$ is finite dimensional, say of dimension $N$. In this setting, the result essentially follows immediately from Fact~\ref{fact:mixture_finite}. All we need to do is construct a natural isometric isomorphism between the spaces $H$ and $\R^N$, and then argue that applying such a mapping doesn't alter the norm of the self-normalized process. 

\begin{lemma}
\label{lem:mixture_hilbert_finite}
Theorem~\ref{thm:mixture_hilbert} holds if we additionally assume that $H$ is finite dimensional, i.e. if there exists $N \geq 1$ and orthonormal functions $\varphi_1, \dots, \varphi_N$ such that
$$
H := \spn\left\{\varphi_1, \dots, \varphi_N\right\}.
$$
\end{lemma}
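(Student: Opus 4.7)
The plan is to reduce the claim to the finite-dimensional Fact~\ref{fact:mixture_finite} via the canonical isometric isomorphism $\Psi : H \to \R^N$ defined by $\Psi(f) := (\langle f, \varphi_1\rangle_H, \dots, \langle f, \varphi_N\rangle_H)^\top$. First I would set $X_t := \Psi(f_t) \in \R^N$ and observe that $(X_t)_{t \geq 1}$ is $(\calF_t)_{t \geq 0}$-predictable because $\Psi$ is a (continuous, linear) bijection. The noise sequence $(\epsilon_t)_{t \geq 1}$ is unchanged, so the hypotheses of Fact~\ref{fact:mixture_finite} are satisfied for the pair $(X_t, \epsilon_t)$, yielding for every $\rho > 0$ a nonnegative supermartingale
\[
\wt{M}_t := \frac{1}{\sqrt{\det(I_N + \rho^{-1}\wt{V}_t)}}\exp\left\{\frac{1}{2}\left\|(\rho I_N + \wt{V}_t)^{-1/2}\wt{S}_t\right\|_2^2\right\},
\]
where $\wt{S}_t := \sum_{s=1}^t \epsilon_s X_s$ and $\wt{V}_t := \sum_{s=1}^t X_s X_s^\top$.

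Next I would check that $\wt{M}_t$ and $M_t$ (as defined in Theorem~\ref{thm:mixture_hilbert}) are literally the same random variable. Since $\Psi$ is linear, $\wt{S}_t = \Psi(S_t)$. Using the outer-product convention $fg^\top = f\langle g, \cdot\rangle_H$ from the preliminaries, the operator $V_t$ on $H$ has $\wt{V}_t$ as its matrix in the basis $(\varphi_n)_{n=1}^N$, and $\id_H$ has matrix $I_N$. Because $\Psi$ is an isometric isomorphism, the matrix of $(\rho\,\id_H + V_t)^{-1/2}$ in this basis equals $(\rho I_N + \wt{V}_t)^{-1/2}$, so $\|(\rho\,\id_H + V_t)^{-1/2}S_t\|_H = \|(\rho I_N + \wt{V}_t)^{-1/2}\wt{S}_t\|_2$. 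Similarly, in finite dimension the Fredholm determinant coincides with the ordinary matrix determinant, so $\det(\id_H + \rho^{-1}V_t) = \det(I_N + \rho^{-1}\wt{V}_t)$. Combining these identities gives $M_t = \wt{M}_t$ pointwise, which establishes the supermartingale claim.

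For the high-probability bound, I would apply Ville's inequality (Fact~\ref{fact:ville}) to $(M_t)_{t \geq 0}$, noting $\E M_0 = 1$ since $V_0 = 0$ and $S_0 = 0$. Rearranging $M_t \leq 1/\delta$ gives, simultaneously for all $t \geq 0$,
\[
\tfrac{1}{2}\left\|(\rho\,\id_H + V_t)^{-1/2}S_t\right\|_H^2 \leq \log\!\left(\tfrac{1}{\delta}\sqrt{\det(\id_H + \rho^{-1}V_t)}\right),
\]
from which the stated concentration inequality follows by taking square roots and reinstating the $\sigma$ scaling via the rescaled process $(S_t/\sigma)_{t \geq 0}$.

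The proof is essentially bookkeeping, so there is no serious obstacle. The only subtle point is making the operator-to-matrix translation rigorous: one must verify that the definition $fg^\top := f\langle g, \cdot\rangle_H$ really does correspond under $\Psi$ to the standard column-vector outer product $\Psi(f)\Psi(g)^\top$, and that the inverse square root $(\rho\,\id_H + V_t)^{-1/2}$ (defined via the spectral theorem on $H$) has as its matrix the inverse square root of the corresponding positive-definite matrix. Both facts are immediate consequences of the isometry property of $\Psi$, but they are worth stating explicitly since the analogous identifications in the infinite-dimensional case (used in the sketch of Theorem~\ref{thm:mixture_hilbert} proper) require more care.
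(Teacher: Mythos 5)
Your proposal is correct and follows essentially the same route as the paper's proof: your coordinate map $\Psi$ is exactly the paper's isometric isomorphism $\tau$, and both arguments apply Fact~\ref{fact:mixture_finite} to the pushed-forward processes, then verify that the self-normalized norm and the (Fredholm) determinant are unchanged under this identification before invoking Ville's inequality. The bookkeeping points you flag (outer products and operator square roots commuting with the isometry) are precisely the observations the paper records as its steps (a) and (b).
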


\begin{proof}
Let $\tau : H \rightarrow \R^N$ be the map that takes a function $f = \sum_{n = 1}^N\theta_n\varphi_n \in H$ to its natural embedding $\tau f := (\theta_1, \dots, \theta_N)^\top \in \R^N$. Not only is the map $\tau$ an isomorphism between $H$ and $\R^N$, but it is also an isometry, i.e. $\|f\|_H = \|\tau f\|_2$ for all $f \in H$. Further, $\tau$ satisfies the relation $\tau^\top = \tau^{-1}$.

Define the ``hatted'' processes $(\wh{S}_t)_{t \geq 1}$ and $(\wh{V}_t)_{t \geq 1}$, which take values in $\R^N$ and $\R^{N \times N}$ respectively as
\[
\wh{S}_t = \sum_{s = 1}^t \epsilon_s \tau k(\cdot, X_s) \qquad \text{and} \qquad \wh{V}_t = \sum_{s = 1}^t (\tau k(\cdot, X_s))(\tau k(\cdot, X_s))^\top.
\]
It is not hard to see that, by the linearity of $\tau$, that for any $t \geq 1$, we have $\wh{S}_t = \tau S_t$ and $\wh{V}_t = \tau V_t \tau^\top$. We observe  that (a) $(\wh{V}_t + \rho I_N)^{-1/2} = \tau(V_t + \rho\id_H)^{-1/2}\tau^{\top}$ and (b) that the eigenvalues of $\wh{V}_t$ are exactly those of $V_t$.

Since the processes $(\wh{S}_t)_{t \geq 1}$ and $(\wh{V}_t)_{t \geq 1}$ satisfy the assumptions of Theorem~\ref{fact:mixture_finite}, we see that the process $(M_t)_{t \geq 0}$ given by
\[
M_t := \frac{1}{\sqrt{\det(I_N + \rho^{-1}\wh{V}_t)}}\exp\left\{\frac{1}{2}\left\|(\rho I_N + \wh{V}_t)^{-1/2}\wh{S}_t\right\|_2^2\right\}
\]
is a non-negative supermartingale with respect to $(\calF_t)_{t \geq 0}$.
From observation (a), the fact $\tau$ is an isometry, and the fact $\tau^\top = \tau^{-1}$, it follows that
\begin{align*}
\left\|(\wh{V}_t + \rho I_N)^{-1/2}\wh{S}_t\right\|_2 &= \left\|\tau(V_t + \rho\id_H)^{-1/2}\tau^{\top}\tau S_t\right\|_2 \\
&= \left\|(V_t + \rho\id_H)^{-1/2}\tau^{-1}\tau S_t\right\|_H \\
&= \left\|(V_t + \rho \id_H)^{-1/2}S_t\right\|_H.
\end{align*}
Further, observation (b) implies that
\[
\det(I_N + \rho \wh{V}_t) = \det(\id_H + \rho V_t).
\]
Substituting these identities into the definition of $(M_t)_{t \geq 0}$ yields the desired result, i.e. that
\[
M_t = \frac{1}{\sqrt{\det(\id_H +\rho^{-1}V_t)}}\exp\left\{\frac{1}{2}\left\|(V_t + \rho I_d)^{-1/2}S_t\right\|_H^2\right\}.
\]
is a non-negative supermartingale with respect to $(\calF_t)_{t \geq 0}$. The remainder of the result follows from applying Ville's Inequality (Fact~\ref{fact:ville}) and rearranging.

\end{proof}

We can prove Theorem~\ref{thm:mixture_hilbert} by truncating the Hilbert space $H$ onto the first $N$ components, applying Lemma~\ref{lem:mixture_hilbert_finite} to the ``truncated'' processes $(\pi_N S_t)_{t \geq 0}$ and $(\pi_N V_t \pi_N)_{t \geq 0}$ to construct a relevant, non-negative supermartingale $M_t^{(N)}$, and then show that the error from truncation in this non-negative supermartingale tends towards zero as $N$ grows large. The following two technical lemmas are useful in showing that this latter truncation tends towards zero.

\begin{lemma}
\label{lem:op_norm_convergence}
For any $t \geq 1$, let $V_t$ be as in the statement of Theorem~\ref{thm:mixture_hilbert}, and let $\pi_N$ be as in Section~\ref{sec:back}. Then, we have
$$
\pi_N V_t \pi_N \xrightarrow[N \rightarrow \infty]{} V_t,
$$
where the above convergence holds under the operator norm on $H$.
\end{lemma}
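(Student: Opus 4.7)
The plan is to exploit the finite-rank structure of $V_t$: since $V_t = \sum_{s=1}^t f_s f_s^\top$ is a finite sum of rank-one operators, it suffices to show that each rank-one summand is well-approximated by its projected version in operator norm, and then sum the errors. This reduces an operator-norm statement to a statement about vectors in $H$, where the basis convergence is trivial.

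Concretely, I would first observe the telescoping identity for any two vectors $a, b \in H$:
\begin{equation*}
a a^\top - b b^\top = a(a - b)^\top + (a - b) b^\top.
\end{equation*}
Applying this with $a = f_s$ and $b = \pi_N f_s$ (so that $a - b = \pi_N^\perp f_s$) and taking operator norms gives the pointwise-in-$s$ bound
\begin{equation*}
\bigl\|f_s f_s^\top - (\pi_N f_s)(\pi_N f_s)^\top\bigr\|_{\mathrm{op}} \;\leq\; \|f_s\|_H \, \|\pi_N^\perp f_s\|_H + \|\pi_N^\perp f_s\|_H \, \|\pi_N f_s\|_H \;\leq\; 2\,\|f_s\|_H\,\|\pi_N^\perp f_s\|_H,
\end{equation*}
where I used that $\|\pi_N f_s\|_H \leq \|f_s\|_H$ and that rank-one operators $u v^\top$ have operator norm $\|u\|_H \|v\|_H$.

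Next, I would apply the triangle inequality to the finite sum
\begin{equation*}
V_t - \pi_N V_t \pi_N^\top = \sum_{s=1}^t \bigl( f_s f_s^\top - (\pi_N f_s)(\pi_N f_s)^\top \bigr),
\end{equation*}
yielding $\|V_t - \pi_N V_t \pi_N^\top\|_{\mathrm{op}} \leq 2\sum_{s=1}^t \|f_s\|_H \, \|\pi_N^\perp f_s\|_H$. Since $(\varphi_n)_{n \geq 1}$ is an orthonormal basis for the separable Hilbert space $H$ and $\|f_s\|_H < \infty$ almost surely, the tail sum $\|\pi_N^\perp f_s\|_H^2 = \sum_{n > N} \langle f_s, \varphi_n\rangle_H^2$ tends to $0$ as $N \to \infty$. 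Because the outer sum has only $t$ (a fixed finite number of) terms, swapping limit and finite sum is trivial, and the full expression vanishes almost surely.

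The main ``obstacle'' is essentially bookkeeping: one must be a little careful because the $f_s$ are random, so the convergence is almost-sure rather than deterministic, and one must note that $\pi_N$ is an orthogonal (self-adjoint) projection so that $\pi_N^\top = \pi_N$, making the notation $\pi_N V_t \pi_N^\top$ in the lemma statement unambiguous. No heavy analytic machinery (weak operator topology, trace-class arguments, etc.) is required — this is precisely the sort of elementary truncation step that the authors emphasize as the advantage of their approach over that of \citet{abbasi2013online}.
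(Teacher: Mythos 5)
Your proof is correct and follows essentially the same route as the paper: decompose $V_t - \pi_N V_t \pi_N$ summand by summand via an add-and-subtract of a cross term, bound each rank-one difference in operator norm, and use that $\|\pi_N^\perp f_s\|_H \to 0$ for each of the finitely many $s \in [t]$. If anything, your bound $\|u v^\top\|_{\mathrm{op}} = \|u\|_H\|v\|_H$ giving $2\|f_s\|_H\|\pi_N^\perp f_s\|_H$ per term is slightly more careful than the paper's eigenvalue computation (which states $\|f_s f_s^\top \pi_N^\perp\|_{\mathrm{op}} = \|\pi_N^\perp f_s\|_H^2$ rather than $\|f_s\|_H\|\pi_N^\perp f_s\|_H$), and either version yields the claimed convergence.
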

\begin{comment}
\jwcomment{
\begin{proof}
    Since $\Delta V_t := V_t - V_{t - 1} = \E\left(f_t f_t^\top \mid \calF_{t - 1}\right)$ is of trace class with probability 1 (by assumption (b)), there 
\end{proof}
}
\end{comment}
\begin{proof}
    Fix $\epsilon>0$, $t \geq 1$, and for $s \in [t]$, let us write $f_s = \sum_{n = 1}^\infty\theta_n(s) \varphi_n$. Since we have assumed $\|f_t\|_H < \infty$ for all $t \geq 1$, there exists some  $N_t < \infty$  such that, for all $s \in [t]$, $\|\pi_{N_t}^\perp f_s\|_H^2 = \sum_{n = N_t + 1}^\infty \theta_n(s)^2 < \frac{\epsilon}{2t}$. We also have, for any $s \in [t]$ and $N \geq 1$,  that $f_s$ is an eigenfunction of $f_sf_s^\top\pi_N^\perp = f_s\langle f_s, \pi_N^\perp(\cdot)\rangle_H$ with corresponding (unique) eigenvalue $\|f_sf_s^\top\pi_N^\perp\|_{op} = \lambda_{\max}(f_sf_s^\top \pi_N^\perp)  = \|\pi_N^\perp f_s\|_H^2 = \sum_{n = N + 1}^\infty \theta_n(s)^2$. Observe that, as an orthogonal projection operator, $\pi_N$ is self-adjoint, i.e. $\pi_N = \pi_N^\top$. With this information, we see that, for $N \geq N_t$, we have
    \begin{align*}
    \left\|\pi_N V_t \pi_N - V_t\right\|_{op} &\leq \sum_{s = 1}^t\left\|\pi_N f_s f_s^\top \pi_N - f_sf_s^\top\right\|_{op} \\
    &= \sum_{s = 1}^t \left\|\pi_N f_s f_s^\top \pi_N - \pi_N f_s f_s^\top + \pi_N f_s f_s^\top - f_s f_s^\top\right\|_{op}\\
    &\leq \sum_{s = 1}^t \left\|\pi_N f_s f_s^\top \pi_N - \pi_N f_s f_s^\top \right\|_{op} + \left\|\pi_N f_s f_s^\top - f_s f_s^\top\right\|_{op} \\
    &\leq \sum_{s = 1}^t \left\|\pi_N\right\|_{op}\left\|f_s f_s^\top \pi_N - f_s f_s^\top \right\|_{op} + \left\|\pi_N f_s f_s^\top - f_s f_s^\top\right\|_{op} \\
    &= \sum_{s = 1}^t 2\left\|f_s f_s^\top\pi_N^\perp\right\|_{op} = \sum_{s = 1}^t 2\|\pi_N^\perp f_s\|_H^2 < \epsilon.
    \end{align*}
    Since $\epsilon > 0$ was arbitrary, we have shown the desired result.
\end{proof}

\begin{lemma}
\label{lem:det_convergence}
For any $t \geq 1$, let $V_t$ be as in Theorem~\ref{thm:mixture_hilbert}, $\rho > 0$ arbitrary, and $\pi_N$ as in Section~\ref{sec:back}. Then, we have 
\[
\det(\id_H + \rho^{-1}\pi_N V_t \pi_N) \xrightarrow[N \rightarrow \infty]{} \det(\id_H + \rho^{-1}V_t).
\]

\end{lemma}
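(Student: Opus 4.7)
The plan is to reduce both Fredholm determinants to ordinary $t \times t$ matrix determinants and then invoke basic continuity of the determinant in the matrix entries.

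First, I would observe that both $V_t = \sum_{s=1}^t f_s f_s^\top$ and $\pi_N V_t \pi_N = \sum_{s=1}^t (\pi_N f_s)(\pi_N f_s)^\top$ have rank at most $t$. Introduce the Gram matrices $K_t := (\langle f_i, f_j\rangle_H)_{i,j\in[t]}$ and $K_t^{(N)} := (\langle \pi_N f_i, \pi_N f_j\rangle_H)_{i,j\in[t]}$. Defining $\Phi : \R^t \to H$ by $(a_1,\dots,a_t)^\top \mapsto \sum_{s=1}^t a_s f_s$ and its analogue $\Phi_N$ with $\pi_N f_s$ in place of $f_s$, one has $V_t = \Phi \Phi^*$ with $K_t = \Phi^* \Phi$, and similarly $\pi_N V_t \pi_N = \Phi_N \Phi_N^*$ with $K_t^{(N)} = \Phi_N^* \Phi_N$. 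Since $AA^*$ and $A^*A$ share the same non-zero eigenvalues with matching multiplicities, the definition of the Fredholm determinant yields
\begin{equation*}
\det(\id_H + \rho^{-1} V_t) = \det(I_t + \rho^{-1} K_t), \qquad \det(\id_H + \rho^{-1} \pi_N V_t \pi_N) = \det(I_t + \rho^{-1} K_t^{(N)}).
\end{equation*}
This mirrors the identity already noted in Section~\ref{sec:back} for the special case $f_s = k(\cdot, x_s)$, so the argument is essentially the same once we work abstractly in $H$.

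Next, I would show that the $t \times t$ determinants on the right-hand side converge. Since $\pi_N$ is the orthogonal projection onto $\spn\{\varphi_1,\dots,\varphi_N\}$ and $(\varphi_n)_{n\geq 1}$ is an orthonormal basis of $H$, for each fixed $s \in [t]$ we have $\pi_N f_s \to f_s$ in $H$. Continuity of the inner product then gives entrywise convergence $K_t^{(N)} \to K_t$, and since the determinant is a polynomial (and hence continuous) function of the entries of a fixed-size matrix, we conclude $\det(I_t + \rho^{-1} K_t^{(N)}) \to \det(I_t + \rho^{-1} K_t)$.

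The one subtlety I expect is justifying the Fredholm determinant identity cleanly; but because both operators have rank at most $t$, the definition $\det(\id_H + A) = \prod_i(1 + \lambda_i(A))$ over non-zero eigenvalues reduces directly to the finite-dimensional case via the shared non-zero spectrum of $\Phi\Phi^*$ and $\Phi^*\Phi$, so no genuine infinite-dimensional analysis is needed. As a sanity check, the operator-norm convergence $\pi_N V_t \pi_N \to V_t$ from Lemma~\ref{lem:op_norm_convergence} independently implies that the (at most $t$) non-zero eigenvalues of $\pi_N V_t \pi_N$ converge (as a multiset) to those of $V_t$, which gives an alternative route to the same conclusion should one prefer to avoid the Gram-matrix reduction.
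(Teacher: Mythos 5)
Your proof is correct, but it takes a different route from the paper's. You reduce both Fredholm determinants to ordinary $t \times t$ determinants via the shared non-zero spectrum of $\Phi\Phi^\ast$ and $\Phi^\ast\Phi$ (the same identity the paper records in Section~\ref{sec:back} for $f_s = k(\cdot, x_s)$, extended verbatim to general $f_s$), then conclude from $\pi_N f_s \to f_s$ in $H$, entrywise convergence $K_t^{(N)} \to K_t$, and continuity of the matrix determinant as a polynomial in the entries. The paper instead stays at the operator level: it cites continuity of $A \mapsto \det(\id_H + A)$ in the trace norm, notes that $\pi_N V_t \pi_N - V_t$ has rank at most $2t$ so that $\|\pi_N V_t \pi_N - V_t\|_{1} \leq 2t\,\|\pi_N V_t \pi_N - V_t\|_{op}$, and then invokes Lemma~\ref{lem:op_norm_convergence}. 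Your main argument is more elementary --- it needs neither the trace-norm continuity of the Fredholm determinant nor Lemma~\ref{lem:op_norm_convergence} (only your side remark does), at the cost of introducing the Gram-matrix bookkeeping; the paper's argument is shorter in context because Lemma~\ref{lem:op_norm_convergence} is already established and reused, and it keeps the whole truncation analysis in a single operator-norm framework. Your "sanity check" via convergence of the non-zero eigenvalue multiset under operator-norm convergence of uniformly-finite-rank self-adjoint operators is also valid and is the variant closest in spirit to what the paper does.
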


\begin{proof}
    We know that the mapping $A \mapsto \det(\id_H + A)$ is continuous under the ``trace norm'' $\|A\|_1 := \sum_{n = 1}^\infty |\lambda_n(A)|$ \citep{lax2002functional}. Thus, to show the desired result, it suffices to show that $\|\pi_N V_t \pi_N - V_t\|_{1} \xrightarrow[N \rightarrow \infty]{} 0$. Observe that both $\pi_N V_t \pi_N$ and $V_t$ are operators of rank at most $t$, so so their difference $\pi_N V_t \pi_N - V_t$ has rank at most $2t$. Thus, we know that
    \[
    \|\pi_N V_t \pi_N - V_t\|_{1} \leq 2t\|\pi_N V_t \pi_N - V_t\|_{op} \xrightarrow[N \rightarrow \infty]{} 0,
    \]
    where the final convergence follows from Lemma~\ref{lem:op_norm_convergence}. Thus, we have shown the desired result.
\end{proof}

We now tie together all of these technical (but intuitive) results in the proof of Theorem~\ref{thm:mixture_hilbert} below.

\begin{proof}[\textbf{Proof of Theorem~\ref{thm:mixture_hilbert}}]
    Let $(\varphi_n)_{n \geq 1}$ be an orthonormal basis for $H$, and for $N \geq 1$, let $\pi_N$ denote the projection operator outlined in Section~\ref{sec:back}. Recall that $\pi_N = \pi_N^\top$. Further $H_N := \spn\{\varphi_1, \dots, \varphi_N\} \subset H$ is the image of $H$ under $\pi_N$. Since  $(S_t)_{t \geq 0}$ is an $H$-valued martingale with respect to $(\calF_t)_{t \geq 1}$, it follows that the projected process $(\pi_N S_t)_{t \geq 1}$ is an $H_N$-valued martingale with respect to  $(\calF_t)_{t \geq 0}$. Further, note that the projected variance process $(\pi_N V_t \pi^\top_N)_{t \geq 0}$ satisfies
    \[
    \pi_N V_t \pi^\top_N = \sum_{s = 1}^t (\pi_N f_s)(\pi_N f_s)^\top.
    \]
    
    Since, for any $N \geq 1$, $H_N$ is a finite-dimensional Hilbert space, it follows from Lemma~\ref{lem:mixture_hilbert_finite} that the process $(M_t^{(N)})_{t \geq 0}$ given by
    \begin{align*}
    M_t^{(N)} &:= \frac{1}{\sqrt{\wt{\det}(\id_{H_N} + \rho^{-1}\pi_N V_t \pi_N^\top)}}\exp\left\{\frac{1}{2}\left\|(\rho \id_{H_N} +\pi_N V_t \pi_N^\top)^{-1/2}\pi_N S_t\right\|_{H_N}^2\right\} \\
    &= \frac{1}{\sqrt{\det(\id_{H} + \rho^{-1}\pi_N V_t \pi_N^\top)}}\exp\left\{\frac{1}{2}\left\|(\rho \id_H +\pi_N V_t \pi_N^\top)^{-1/2}\pi_N S_t\right\|_{H}^2\right\},
    \end{align*}
    is a non-negative supermartingale with respect to $(\calF_t)_{t \geq 0}$. In the above $\id_{H_N}$ denotes the identity $\id_H$ restricted to $H_N \subset H$ and $\wt{\det}$ denotes the determinant restricted to the subspace $H_N$. The equivalence of the second and third terms above is trivial. 
    
    We now argue that for any $t \geq 1$, 
    \begin{equation}\label{eq:conv}
    \lim_{N \rightarrow \infty}M_t^{(N)} = M_t. 
    \end{equation}
    If we show this to be true, then we have, for any $t \geq 1$
    \begin{align*}
    \E\left(M_t \mid \calF_{t - 1}\right) &= \E\left(\liminf_{N \rightarrow \infty}M_t^{(N)} \mid \calF_{t - 1}\right) \\
    &\leq \liminf_{N \rightarrow \infty}\E\left(M_t^{(N)} \mid \calF_{t - 1}\right) \\
    &\leq \liminf_{N \rightarrow \infty}M_{t - 1}^{(N)} \\
    &= M_{t - 1},
    \end{align*}
    which implies $(M_t)_{t \geq 0}$ is a non-negative supermartingale with respect to $(\calF_t)_{t \geq 0}$ thus proving the result. In the above, the first inequality follows from Fatou's lemma for conditional expectations (see \citet{durrett2019probability}, for instance), and the second inequality follows from the supermartingale property.

    Lemma~\ref{lem:det_convergence} tells us that $\det(\id_H + \rho^{-1}\pi_N V_t \pi_N) \xrightarrow[N \rightarrow \infty]{} \det(\id_H + \rho^{-1} V_t)$ for all $t \geq 1$, so to show the desired convergence in~\eqref{eq:conv}, it suffices to show that \[\|(\rho \id_H + \pi_NV_t\pi_N)^{-1/2}\pi_NS_t\|_H \xrightarrow[N \rightarrow \infty]{} \|(\rho \id_H + V_t)^{-1/2}S_t\|_H \text{ for any $t$}. \] 

    %Lemma~\ref{lem:op_norm_convergence} tells us that $\|V_t - \pi_N V_t \pi_N\|_{op} \xrightarrow[N \rightarrow \infty]{} 0$, which additionally implies that \[\|(\rho \id_H + \pi_NV_t\pi_N)^{-1/2} - (\rho \id_H + V_t)^{-1/2}\|_{op} \xrightarrow[N \rightarrow \infty]{} 0.\] 

    Let $\cV_t := \rho \id_H + V_t$ and $\cV_t(N) := \rho \id_H + \pi_N V_t \pi_N$ in the following line of reason for simplicity. We trivially have
    \begin{align*}
       \left|\|\cV_t(N)^{-1/2}\pi_NS_t\|_H - \|\cV_t^{-1/2}S_t\|_H\right| &\leq \left\|\cV_t(N)^{-1/2}\pi_NS_t - \cV_t^{-1/2}S_t\right\|_H \\
       &= \left\|\cV_t(N)^{-1/2}\pi_NS_t - \cV_t(N)^{-1/2}S_t + \cV_t(N)^{-1/2}S_t - \cV_t^{-1/2}S_t\right\|_H \\
       &\leq \left\|\cV_t(N)^{-1/2}\right\|_{op}\left\|\pi_N^\perp S_t\right\|_H + \left\|\cV_t(N)^{-1/2} - \cV_t^{-1/2}\right\|_{op}\|S_t\|_H \\
       &\xrightarrow[N \rightarrow \infty]{} 0.
    \end{align*}
    as $\lim_{N \rightarrow \infty}\|\pi_N^\perp f\| = 0$ for any $f \in H$ of finite norm, and Lemma~\ref{lem:op_norm_convergence} tells us that $\|V_t - \pi_N V_t \pi_N\|_{op} \xrightarrow[N \rightarrow \infty]{} 0$, which in turn implies that $\|\cV_t(N)^{-1/2} - \cV_t^{-1/2}\|_{op} = \|(\rho \id_H + \pi_NV_t\pi_N)^{-1/2} - (\rho \id_H + V_t)^{-1/2}\|_H\xrightarrow[N \rightarrow \infty]{} 0$. Thus, we have shown the desired result.

    The second part of the claim follows from a direct application of Fact~\ref{fact:ville} and rearranging.

\end{proof}

As a final result in this appendix, we provide a proof of Corollary~\ref{cor:gram_bd}. This corollary allows for a more direct comparison of Theorem~\ref{thm:mixture_hilbert} (and thus Corollary 3.5 of \citet{abbasi2013online}) with those of \citet{chowdhury2017kernelized}. Our proof is a simple generalization Lemma 1 in the aforementioned paper to the case of arbitrary regularization parameters.

\begin{proof}[\textbf{Proof of Corollary~\ref{cor:gram_bd}}]
The first result is straightforward, and follows from the identity
\[
\det(\id_H + \rho^{-1}V_t) = \det(I_t + \rho^{-1}K_t),
\]
which we bring to attention in Section~\ref{sec:back}.

The second result follows from the following line of reasoning. Before proceeding, recall that $\Phi_t := (k(\cdot, X_1), \dots, k(\cdot, X_t))^\top$, $V_t = \Phi_t^\top \Phi_t$, $K_t = \Phi_t\Phi_t^\top$ and $S_t = \sum_{s = 1}^t\epsilon_s k(\cdot, X_s) = \Phi_t^\top \epsilon_{1:t}$.
\begin{align*}
\left\|(\rho \id_H + V_t)^{-1}S_t\right\|_H^2 &= \epsilon_{1:t}^\top \Phi_t (\rho \id_H + \Phi_t^\top \Phi_t)^{-1}\Phi_t^\top \epsilon_{1:t} \\
&= \epsilon_{1:t}^\top(\rho^{-1/2}\Phi_{t})\left(\id_H + (\rho^{-1/2}\Phi_t)^\top(\rho^{-1/2}\Phi_t)\right)^{-1}(\rho^{-1/2}\Phi_t)^\top \epsilon_{1:t} \\
&= \epsilon_{1:t}^T \rho^{-1}\Phi_t\Phi_t^\top \left(I_t + \rho^{-1}\Phi_t\Phi_t^\top\right)^{-1}\epsilon_{1:t}\\
&= \epsilon_{1:t}^\top (\rho^{-1}K_t)(I_t + \rho^{-1}K_t)^{-1}\epsilon_{1:t} \\
&= \epsilon_{1:t}^\top(I_t + \rho K_t^{-1})^{-1}\epsilon_{1:t}\\
&= \left\|(I_t + \rho K_t^{-1})^{-1/2}\epsilon_{1:t}\right\|_2^2.
\end{align*}
In the above, the second equality comes from pulling out a multiplicative factor of $\rho$ form the center operator inverse. The third inequality comes from the famed ``push through'' identity. Lastly, the second to last equality comes from observing that (a) $\rho^{-1}K_t$ and $(I_t + \rho^{-1} K_t)^{-1}$ are simultaneously diagonalizable matrices and (b) for scalars, we have the identity $(1 + a^{-1})^{-1} = a(1 + a)^{-1}$. Thus, we have shown the desired result. 
\end{proof}

\section{Technical Lemmas for Theorem~\ref{thm:reg}}
\label{app:reg}
In this appendix, we provide various technical lemmas needed for the proof of Theorem~\ref{thm:reg}. We then follow these lemmas with a full proof of Theorem~\ref{thm:reg}, which extends the sketch provided in the main body of the paper. Most of the following technical lemmas either already exist in the literature~\citep{chowdhury2017kernelized} or are extensions of what is known in the case of finite-dimensional, linear bandits~\citep{abbasi2011improved}. We nonetheless provide self-contained proofs for the sake of completeness.

\begin{lemma}
\label{lem:conf_set}
Let $(f_t)_{t \geq 1}$ be the sequence of functions defined in Algorithm~\ref{alg:UCB}, and assume Assumption~\ref{ass:regret} holds. Let $\delta \in (0, 1)$ be an arbitrary confidence parameter. Then, with probability at least $1 - \delta$, simultaneously for all $t \geq 1$, we have 
\[
\left\|(V_t + \rho \id_H)^{1/2}(f_t - f^\ast)\right\|_H \leq \sigma\sqrt{2\log\left(\frac{1}{\delta}\sqrt{\det(\id_H + \rho^{-1}V_t)}\right)} + \rho^{1/2}D,
\]
where we recall that the right hand side equals $U_t$.
\end{lemma}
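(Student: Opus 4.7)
The plan is to mimic the standard confidence-ellipsoid derivation for linear bandits (as in \citet{abbasi2011improved}), but carrying out the operator algebra directly in the RKHS $H$ and invoking Theorem~\ref{thm:mixture_hilbert} to control the noise term. The strategy splits $(V_t + \rho \id_H)^{1/2}(f_t - f^\ast)$ into a ``bias'' piece coming from the regularization $\rho$ applied to $f^\ast$, and a ``variance'' piece that is exactly the self-normalized process to which Theorem~\ref{thm:mixture_hilbert} applies.

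First I would rewrite $f_t$ in terms of $f^\ast$ and noise. By the reproducing property, $Y_{1:t} = \Phi_t f^\ast + \epsilon_{1:t}$, where I use $\Phi_t : H \to \R^t$ with $(\Phi_t f)_s = \langle f, k(\cdot, X_s)\rangle_H$ and adjoint $\Phi_t^\top v = \sum_{s=1}^t v_s k(\cdot, X_s)$, so that $\Phi_t^\top \Phi_t = V_t$ and $\Phi_t^\top \epsilon_{1:t} = S_t$. Substituting into $f_t = (V_t + \rho\id_H)^{-1}\Phi_t^\top Y_{1:t}$ and using $V_t f^\ast = (V_t + \rho\id_H)f^\ast - \rho f^\ast$ yields the key identity
\[
(V_t + \rho\id_H)(f_t - f^\ast) = -\rho f^\ast + S_t.
\]

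Applying $(V_t + \rho\id_H)^{-1/2}$ to both sides and using the triangle inequality gives
\[
\left\|(V_t+\rho\id_H)^{1/2}(f_t-f^\ast)\right\|_H \;\leq\; \rho\left\|(V_t+\rho\id_H)^{-1/2}f^\ast\right\|_H \;+\; \left\|(V_t+\rho\id_H)^{-1/2}S_t\right\|_H.
\]
For the first (bias) term, since $V_t \succeq 0$ we have $(V_t+\rho\id_H)^{-1} \preceq \rho^{-1}\id_H$ as self-adjoint operators, so $\rho\|(V_t+\rho\id_H)^{-1/2}f^\ast\|_H \leq \rho \cdot \rho^{-1/2}\|f^\ast\|_H \leq \rho^{1/2}D$ by Assumption~\ref{ass:regret}(a).

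For the second (noise) term, I would apply Theorem~\ref{thm:mixture_hilbert} with $f_s = k(\cdot, X_s)$ and the noise variables $(\epsilon_s)_{s\geq 1}$, which satisfy the predictability and subGaussianity hypotheses by Assumption~\ref{ass:regret}(b) together with the fact that $X_t$ is $\calF_{t-1}$-measurable in Algorithm~\ref{alg:UCB}; the norm-finiteness $\|k(\cdot, X_t)\|_H^2 = k(X_t,X_t)\leq L$ comes from Assumption~\ref{ass:kernel}(a). The theorem yields, with probability at least $1-\delta$ simultaneously for all $t\geq 1$,
\[
\left\|(V_t + \rho\id_H)^{-1/2}S_t\right\|_H \;\leq\; \sigma\sqrt{2\log\!\left(\tfrac{1}{\delta}\sqrt{\det(\id_H + \rho^{-1}V_t)}\right)}.
\]
Adding the two pieces reproduces the advertised $U_t$. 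The only subtlety worth verifying carefully, and thus the most delicate step, is the operator identity $(V_t+\rho\id_H)f^\ast - V_t f^\ast = \rho f^\ast$ and the well-definedness of $(V_t+\rho\id_H)^{-1/2}$ on $H$ (which follows from $V_t$ being positive semidefinite of finite rank and $\rho > 0$); everything else is routine manipulation.
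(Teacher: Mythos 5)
Your proposal is correct and takes essentially the same route as the paper's own proof: the identical decomposition $f_t - f^\ast = (V_t+\rho\id_H)^{-1}S_t - \rho(V_t+\rho\id_H)^{-1}f^\ast$, the same operator-norm bound $\rho\|(V_t+\rho\id_H)^{-1/2}f^\ast\|_H \leq \rho^{1/2}D$ for the bias term, and the same invocation of Theorem~\ref{thm:mixture_hilbert} with $f_s = k(\cdot,X_s)$ for the noise term. The only cosmetic difference is that you package the algebra as the identity $(V_t+\rho\id_H)(f_t-f^\ast) = S_t - \rho f^\ast$ instead of the paper's add-and-subtract of $\rho(\rho\id_H+V_t)^{-1}f^\ast$.
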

\begin{proof}
First, observe that we have
\begin{align*}
f_t - f^\ast &= (\rho \id_H + V_t)^{-1}\Phi_t^\top Y_{1:t} - f^\ast \\
&= (\rho \id_H + V_t)^{-1}\Phi_t^\top (\Phi_t f^\ast + \epsilon_{1:t}) - f^\ast \\
&= (\rho \id_H + V_t)^{-1}\Phi_t^\top (\Phi_t f^\ast + \epsilon_{1:t}) - f^\ast \pm \rho(\rho \id_H + V_t)^{-1}f^\ast \\
&= (\rho \id_H + V_t)^{-1}\Phi_t^\top \epsilon_{1:t} - \rho (\rho\id_H + V_t)^{-1}f^\ast.
\end{align*}
Applying the triangle inequality to the above, we have
\begin{align*}
\left\|(\rho\id_H + V_t)^{1/2}(f_t - f^\ast)\right\|_H &\leq \left\|(\rho\id_H + V_t)^{-1/2}\Phi_t^\top \epsilon_{1:t}\right\|_H + \rho\left\|(\rho\id_H + V_t)^{-1/2}f^\ast\right\|_H \\
&\leq \sigma\sqrt{2\log\left(\frac{1}{\delta}\sqrt{\det(\id_H + \rho^{-1}V_t)}\right)} + \rho^{1/2}D.
\end{align*}
To justify the final inequality, we look at each term separately. For the first term, observe that $V_t = \rho \id_H + \sum_{s = 1}^t k(\cdot, X_t)k(\cdot, X_t)^\top$ and $S_t := \Phi_t^\top\epsilon_{1:t} = \sum_{s = 1}^t \epsilon_s k(\cdot, X_s)$. Thus, we are in the setting of Theorem~\ref{thm:mixture_hilbert}, and thus have, with probability at least $1 - \delta$, simultaneously for all $t \geq 0$,
\[
\left\|(\rho\id_H + V_t)^{-1/2}\Phi_t^\top \epsilon_{1:t}\right\|_H \leq \sigma\sqrt{2\log\left(\frac{1}{\delta}\sqrt{\det(\id_H + \rho^{-1}V_t)}\right)}.
\]
For the second term, observe that (a) $\lambda_{\min}(\rho\id_H + V_t) \geq \rho$ and (b) by Assumption~\ref{ass:regret}, we have $\|f^\ast\|_H \leq D$. Thus applying Holder's inequality, we have, deterministically
\[
\rho\left\|(\rho\id_H + V_t)^{-1/2}f^\ast\right\|_H \leq \rho\left\|(\rho\id_H + V_t)^{-1/2}\right\|_{op} \left\|f^\ast\right\|_H 
 \leq \rho^{1/2}\|f^\ast\|_H \leq \rho^{1/2}D.
\]
These together give us the desired result.

\end{proof}

The following ``elliptical potential'' lemma, abstractly, aims to control the the growth of the squared, self-normalized norm of the selected actions. We more or less port the argument from \citet{abbasi2011improved}, which provides an analogue in the linear stochastic bandit case. We just need to be mildly careful to work around the fact we are using Fredholm determinants.
\begin{lemma}
\label{lem:CS_det_bd}
For any $t \geq 1$, let $V_t$ be the covariance operator defined in Algorithm~\ref{alg:UCB}, and let $\rho > 0$ be arbitrary. We have the identity
\[
\det(\id_H + \rho^{-1}V_t) = \prod_{s = 1}^t\left(1 + \left\|(\rho \id_H + V_{s - 1})^{-1/2}k(\cdot, X_s)\right\|_H^2\right).
\]
In particular, if $\rho \geq 1 \lor L$, where $L$ is the bound outlined in Assumption~\ref{ass:kernel}, we have
\[
\sum_{s = 1}^t \left\|(\rho \id_H + V_{s - 1})^{-1/2}k(\cdot, X_s)\right\|_H^2 \leq 2\log\det(\id_H + \rho^{-1}V_t).
\]

\end{lemma}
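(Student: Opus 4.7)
The plan is to establish the product formula first by induction on $t$, then take logarithms and use an elementary scalar inequality together with the boundedness assumption on $k$.

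For the first identity, I would proceed inductively. Since $V_0 = 0$, the base case $t = 0$ is the trivial statement $\det(\id_H) = 1$. For the inductive step, note that $V_s = V_{s-1} + k(\cdot, X_s)k(\cdot, X_s)^\top$, so
\[
\id_H + \rho^{-1}V_s = (\id_H + \rho^{-1}V_{s-1}) + \rho^{-1}k(\cdot, X_s)k(\cdot, X_s)^\top.
\]
Writing $A_{s-1} := \id_H + \rho^{-1}V_{s-1}$, which is a strictly positive, invertible operator of the form identity plus finite rank, I would factor out $A_{s-1}^{1/2}$ on each side and apply the standard rank-one Fredholm determinant identity (the infinite-dimensional analogue of the matrix determinant lemma, valid since $A_{s-1}^{-1/2}k(\cdot, X_s)k(\cdot, X_s)^\top A_{s-1}^{-1/2}$ is rank one and trace class):
\[
\det(A_{s-1} + \rho^{-1}k(\cdot,X_s)k(\cdot,X_s)^\top) = \det(A_{s-1})\,\bigl(1 + \rho^{-1}\|A_{s-1}^{-1/2}k(\cdot,X_s)\|_H^2\bigr).
\]
Finally, using the scalar identity $\rho^{-1}A_{s-1}^{-1} = (\rho\id_H + V_{s-1})^{-1}$, the factor $\rho^{-1}\|A_{s-1}^{-1/2}k(\cdot,X_s)\|_H^2$ simplifies exactly to $\|(\rho\id_H + V_{s-1})^{-1/2}k(\cdot, X_s)\|_H^2$, completing the induction.

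For the second bound, take logarithms of the product formula to get
\[
\log\det(\id_H + \rho^{-1}V_t) = \sum_{s=1}^t \log\!\left(1 + \|(\rho\id_H + V_{s-1})^{-1/2}k(\cdot, X_s)\|_H^2\right).
\]
By Assumption~\ref{ass:kernel}, $\|k(\cdot, X_s)\|_H^2 = k(X_s, X_s) \leq L$, and since $\lambda_{\min}(\rho\id_H + V_{s-1}) \geq \rho \geq L$, the summand argument satisfies $\|(\rho\id_H + V_{s-1})^{-1/2}k(\cdot, X_s)\|_H^2 \leq L/\rho \leq 1$. On $[0,1]$, the elementary inequality $\log(1+x) \geq x/2$ holds, so applying it term by term and rearranging yields the desired bound.

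The only genuinely non-routine step is the rank-one determinant identity for Fredholm determinants, since $V_s$ is an operator on an infinite-dimensional $H$. I would justify it by reducing to the finite-dimensional case via the fact that all operators involved differ from $\id_H$ by a finite-rank perturbation living in $\spn\{k(\cdot, X_1),\dots,k(\cdot, X_s)\}$; the Fredholm determinant then agrees with the ordinary determinant on this finite-dimensional invariant subspace, where the matrix determinant lemma applies verbatim. Everything else is bookkeeping: the telescoping product, the kernel bound, and the scalar logarithm inequality.
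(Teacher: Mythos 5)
Your proposal is correct and follows essentially the same route as the paper: both reduce the Fredholm determinant to the ordinary determinant on the finite-dimensional span of the kernel evaluations, apply the rank-one determinant update (you by induction, the paper by iterating the factorization), and then conclude via the scalar bound $x \leq 2\log(1+x)$ on $[0,1]$ together with $\|(\rho\id_H + V_{s-1})^{-1/2}k(\cdot, X_s)\|_H^2 \leq L/\rho \leq 1$. Your write-up even spells out the $L/\rho$ bound that the paper only asserts, so no gaps remain.
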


\begin{proof}
    Let $H_t \subset H$ be the finite-dimensional Hilbert space  $H_t := \spn\{k(\cdot, X_1), \dots, k(\cdot, X_t)\}$. Let $\det_{H_t}$ denote the determinant restricted to $H_t$, i.e.\ the map that acts on a (symmetric) operator $A : H_t \rightarrow H_t$ by $\det_{H_t}(A) := \prod_{s = 1}^t \lambda_s(A)$, where $\lambda_1(A), \dots, \lambda_t(A)$ are the enumerated eigenvalues of $A$. Observe the identity
    \[
    \det(\id_H + \rho^{-1}V_t) = \det_{H_t}(\id_{H_t} + \rho^{-1}V_t),
    \]
    where we recall the determinant on the lefthand side is the Fredholm determinant, as defined in Section~\ref{sec:back}. Next, following the same line of reasoning as \citet{abbasi2011improved}, we have
    \begin{align*}
    &\det_{H_t}(\rho \id_{H_t} + V_t) \\
    &= \det_{H_t}(\rho \id_{H_t} + V_{t - 1})\det_{H_t}\left(\id_{H_t} + (\rho \id_{H_t} + V_{t - 1})^{-1/2}k(\cdot, X_t)k(\cdot, X_t)^\top (\rho \id_{H_t} + V_{t - 1})^{-1/2}\right) \\
    &=\det_{H_t}(\rho \id_{H_t} + V_{t - 1})\left(1 + \left\|(\rho \id_{H_t} + V_{t - 1})^{-1/2}k(\cdot, X_t)\right\|_H^2\right)\\
    &= \cdots \text{ (Iterating $t - 1$ more times) }\\
    &= \det_{H_t}(\rho \id_H)\prod_{s = 1}^t\left(1 + \left\|(\rho \id_{H_t} + V_{s - 1})^{-1/2}k(\cdot, X_s)\right\|_H^2\right) \\
    &= \det_{H_t}(\rho \id_H)\prod_{s = 1}^t\left(1 + \left\|(\rho \id_{H} + V_{s - 1})^{-1/2}k(\cdot, X_s)\right\|_H^2\right),
    \end{align*}
    where the last equality comes from realizing, for all $s \in [t]$, $\|(\rho \id_{H_t} + V_{s  -1})^{-1/2}k(\cdot, X_s)\|_H = \|(\rho \id_{H} + V_{s - 1})^{-1/2}k(\cdot, X_s)\|_H$. Thus, rearranging yields
    \[
    \det_{H_t}(\id_{H_t} + \rho^{-1}V_t) = \prod_{s = 1}^t\left(1 + \left\|(\rho \id_H + V_{s - 1})^{-1/2}k(\cdot, X_s)\right\|_H^2\right),
    \]
    which yields the first part of the claim. 
    
    Now, to see the second part of the claim, observe the bound $x \leq 2\log(1 + x), \forall x \in [0, 1]$. Observing that, for all $s \in [t]$, $\left\|(\rho \id_H + V_{s - 1})^{-1/2}k(\cdot, X_s)\right\|_H \leq 1$ when $\rho \geq 1 \lor L$, we have
    \begin{align*}
    \sum_{s = 1}^t \left\|(\rho \id_H + V_{s - 1})^{-1/2}k(\cdot, X_s)\right\|_H^2 &\leq 2\sum_{s = 1}^t \log\left(1 + \left\|(\rho \id_H + V_{s - 1})^{-1/2}k(\cdot, X_s)\right\|_H^2\right) \\
    &= 2\log\left(\prod_{s = 1}^t\left(1 + \left\|(\rho \id_H + V_{s - 1})^{-1/2}k(\cdot, X_s)\right\|_H^2\right)\right) \\
    &= 2\log\det(\id_H + \rho^{-1}V_t),
    \end{align*}
    proving the second part of the lemma.
\end{proof}

With the above lemmas, along with the concentration results provided by Theorem~\ref{thm:mixture_hilbert}, we can provide a full proof for Theorem~\ref{thm:reg}.

\begin{proof}[\textbf{Proof of Theorem~\ref{thm:reg}}]
We take the standard approach of (a) first bounding instantaneous regret and then (b) applying the Cauchy-Schwarz inequality to bound the aggregation of terms. To start, for any $t \in [T]$, define the ``instantaneous regret'' as $r_t := f^\ast(x^\ast) - f^\ast(X_t)$, where we recall $x^\ast := \arg\max_{x \in \calX}f^\ast(x)$. By applying Lemma~\ref{lem:conf_set}, we have with probability at least $1 - \delta$ that
\begin{align*}
r_t &= f^\ast(x^\ast) - f^\ast(X_t) \\
&\leq \wt{f}_{t}(X_t) - f^\ast(X_t) \\
&= \wt{f}_{t}(X_t) - f_{t - 1}(X_t) + f_{t - 1}(X_t) - f^\ast(X_t) \\
&= \langle \wt{f}_{t} - f_{t - 1}, k(\cdot, X_t)\rangle_H - \langle f_{t - 1} - f^\ast, k(\cdot, X_t)\rangle_H \\
&\leq \left\|(\rho \id_H + V_{t - 1})^{-1/2}k(\cdot, X_t)\right\|_H\left( \left\|(\rho \id_H + V_{t - 1})^{1/2}(\wt{f}_{t} - f_{t - 1})\right\|_H +  \left\|(\rho \id_H + V_{t - 1})^{1/2}(f_{t - 1} - f^\ast)\right\|_H \right)\\
&\leq 2U_{t - 1}\left\|(\rho \id_H + V_{t - 1})^{-1/2}k(\cdot, X_t)\right\|_H,
\end{align*}
where $\wt{f}_t$ and $f_{t - 1}$ are as in Algorithm~\ref{alg:UCB}. Note that, in the above, we apply Lemma~\ref{lem:conf_set} in obtaining the first inequality (which is the  ``optimism in the face of uncertainty'' part of the bound), and additionally in obtaining the last inequality. The second to last inequality follows from applying Cauchy-Schwarz.

With the above bound, we can apply again the Cauchy-Schwarz inequality to see
\begin{align*}
R_T &= \sum_{t = 1}^T r_t \leq \sqrt{T\sum_{t = 1}^T r_t^2} \leq U_T\sqrt{2T\sum_{t = 1}^T\left\|(\rho \id_H + V_{t - 1})^{-1/2}k(\cdot, X_t)\right\|_H^2} \\
&\leq U_T\sqrt{2T\log\det(\id_H + \rho^{-1}V_T)} \\
&= \left(\sigma\sqrt{2\log\left(\frac{1}{\delta}\sqrt{\det(\id_H + \rho^{-1}V_T)}\right)} + \rho^{1/2}D\right)\sqrt{2T\log\det(\id_H + \rho^{-1}V_T)} \\
&\leq \left(\sigma\sqrt{2\log(1/\delta)} + \sigma\sqrt{2\gamma_T(\rho)} + \rho^{1/2}D\right)\sqrt{4T\gamma_T(\rho)} \\
&= \sigma\gamma_T(\rho)\sqrt{8T} + D\sqrt{4\rho \gamma_T(\rho) T} + \sigma\sqrt{8T\log(1/\delta)}\\
&= O\left(\gamma_T(\rho)\sqrt{T} + \sqrt{\rho\gamma_T(\rho)T}\right).
\end{align*}
In the above, the second inequality follows from the second part of Lemma~\ref{lem:CS_det_bd}, the following equality follows from substituting in $U_T$, and the final inequality follows from the definition of the maximum information gain $\gamma_T(\rho)$ and the fact that $\sqrt{a + b} \leq \sqrt{a} + \sqrt{b}$ for all $a, b \geq 0$. The last, big-Oh bound is straightforward. With this, we have proven the first part of the theorem.

Now, suppose the kernel $k$ experiences $(C, \beta)$-polynomial eigendecay. Then, by Fact~\ref{fact:edecay}, we know that
\begin{align*}
\gamma_T(\rho) &\leq \left(\left(\frac{CB^2T}{\rho}\right)^{1/\beta}\log^{-1/\beta}\left(1 + \frac{LT}{\rho}\right) + 1\right)\log\left(1 + \frac{LT}{\rho}\right) \\
&= \wt{O}\left(\left(\frac{T}{\rho}\right)^{1/\beta}\right).
\end{align*}
We aim to set $\rho \asymp \left(\frac{T}{\rho}\right)^{1/\beta},$ which occurs when $\rho = O(T^{\frac{1}{1 + \beta}})$. When this happens, we have
\[
\left(\frac{T}{\rho}\right)^{1/\beta}\sqrt{T} = T^{\frac{1}{1 + \beta} + \frac{1}{2}} = T^{\frac{3 + \beta}{2 + 2\beta}}.
\]
Applying this, we have that
\begin{align*}
R_T &= O\left(\gamma_T(\rho)\sqrt{T} +\sqrt{\rho \gamma_T(\rho) T}\right)\\
&= \wt{O}\left(T^{\frac{3  + \beta}{2 + 2\beta}}\right),
\end{align*}
which, in particular, is sublinear for any $\beta > 1$. Thus, we are done.

\end{proof}

\end{document}